\newcommand\crule[3][black]{\textcolor{#1}{\rule{#2}{#3}}}
\definecolor{darkblue}{rgb}{0,0.08,0.45}
\DeclareMathOperator{\diag}{diag}
\DeclareMathOperator{\argmin}{argmin}
\DeclareMathOperator{\argmax}{argmax}
\DeclareMathOperator{\median}{median}
\newcommand{\uargmin}[1]{\underset{#1}{\argmin}\;}
\newcommand{\uargmax}[1]{\underset{#1}{\argmax}\;}
\newtheorem{prop}{Proposition}
\newcommand{\Un}{\mathbf{1}}
\newcommand{\norm}[1]{|\!| #1 |\!|}
\newcommand{\normu}[1]{\norm{#1}_{1}}
\newcommand{\normd}[1]{\norm{#1}_{2}}
\newcommand{\valL}{L}
\newcommand{\subgradp}{\phi}
\newcommand{\indexL}{l}
\newcommand{\const}{K}
\def\R{{\mathbb R}}
\definecolor{columbiablue}{rgb}{0.61, 0.87, 1.0}
\definecolor{babyblueeyes}{rgb}{0.63, 0.79, 0.95}
\definecolor{beaublue}{rgb}{0.74, 0.83, 0.9} 
\definecolor{darkcandyapplered}{rgb}{0.64, 0.0, 0.0}
\definecolor{amaranth}{rgb}{0.9, 0.17, 0.31}
\definecolor{lightcoral}{rgb}{0.94, 0.5, 0.6}
\definecolor{piggypink}{rgb}{0.98, 0.75, 0.9}
\definecolor{tomato}{rgb}{1.0, 0.65, 0.65}
\definecolor{magicmint}{rgb}{0.67, 0.94, 0.82}
\definecolor{carnationpink}{rgb}{1.0, 0.65, 0.89}
\definecolor{palepink}{rgb}{0.98, 0.85, 0.87}
\definecolor{dukeblue}{rgb}{0.0, 0.0, 0.61}
\definecolor{colorbrewer0}{RGB}{45,45,45}
\definecolor{colorbrewer1}{RGB}{228,26,28}
\definecolor{colorbrewer2}{RGB}{55,126,184}
\definecolor{colorbrewer3}{RGB}{77,175,74}
\definecolor{colorbrewer4}{RGB}{152,78,163}
\definecolor{colorbrewer5}{RGB}{255,127,0}
\definecolor{colorbrewer6}{RGB}{255,255,51}
\definecolor{colorbrewer7}{RGB}{166,86,40}
\definecolor{colorbrewer8}{RGB}{247,129,191}
\definecolor{colorbrewer9}{RGB}{153,153,153}
\definecolor{colorbrewer10}{RGB}{24,167,181}
\definecolor{pinegreen}{rgb}{0.0, 0.47, 0.44}
\definecolor{navyblue}{rgb}{0.0, 0.0, 0.5}
\definecolor{green2}{HTML}{e6f5f0}
\definecolor{bleudefrance}{rgb}{0.19, 0.55, 0.91}
\definecolor{citecolor}{HTML}{0071bc}
\newcommand{\AngieItem}[1]{\tikz[baseline=(AngieItem.base),remember
picture]{%
\node[fill=columbiablue,inner sep=2pt] (AngieItem){#1};}}
\newcommand{\AngieHighlight}{\tikz[overlay,remember picture]{%
\fill[black] ([yshift=2pt,xshift=-\pgflinewidth]AngieItem.east) -- ++(1.5pt,-1.5pt)
-- ++(-1.5pt,-1.5pt) -- cycle;
}}
\newcommand{\AngieItemRed}[1]{\tikz[baseline=(AngieItemRed.base),remember
picture]{%
\node[fill=magicmint,inner sep=2pt] (AngieItemRed){#1};}}
\newcommand{\AngieHighlightB}{\tikz[overlay,remember picture]{%
\fill[black] ([yshift=2pt,xshift=-\pgflinewidth]AngieItemRed.east) -- ++(1.5pt,-1.5pt)
-- ++(-1.5pt,-1.5pt) -- cycle;
}}
\begin{document}
%
\title{Energy Models for Better Pseudo-Labels:
Improving Semi-Supervised Classification with the 1-Laplacian Graph Energy}

\author{Angelica I~Aviles-Rivero,
        Nicolas~Papadakis,
        Ruoteng~Li,
        Philip Sellars,
        Samar M~Alsaleh, \\
        Robby T~Tan
        and~Carola-Bibiane~Sch\"{o}nlieb
\IEEEcompsocitemizethanks{\IEEEcompsocthanksitem AI Aviles-Rivero, P sellars and CB Sch\"{o}nlieb are with Department of Applied Mathematics and Theoretical Physics, University of Cambridge, UK \protect
\{ai323,ps644,cbs31\}@cam.ac.uk
\IEEEcompsocthanksitem N Papadakis is with the IMB, Université Bordeaux, France.  \{nicolas.papadakis@math.u-bordeaux.fr\}
\IEEEcompsocthanksitem R Li is with ByteDance, Singapore. rhein@bytedance.com
\IEEEcompsocthanksitem S Alsaleh is with the Department of Computer Science, Taibah University, KSA. asamar@taibahu.edu.sa
\IEEEcompsocthanksitem RT Tan is with National University of Singapore and Yale-NUS College, Singapore. robby.tan@nus.edu.sg.
}
}

\IEEEtitleabstractindextext{%
\begin{abstract}
Semi-supervised classification is a great focus of interest, as in real-world scenarios obtaining labels is expensive, time-consuming and might require expert knowledge. This has motivated the fast development of semi-supervised techniques, whose  performance is on a par with or better than supervised approaches.
A current major challenge for semi-supervised techniques is how to better handle the network calibration and confirmation bias problems for improving performance. In this work, we argue that  energy models are an effective alternative to such problems. With this motivation in mind, we propose a hybrid framework for semi-supervised classification called CREPE model  (1-Lapla\textbf{C}ian g\textbf{R}aph \textbf{E}nergy for \textbf{P}seudo-lab\textbf{E}ls). Firstly, we introduce a new energy model based on the non-smooth $\ell_1$ norm of the normalised graph 1-Laplacian. Our functional enforces a sufficiently smooth solution and  strengthens the intrinsic relation between the labelled and unlabelled data. Secondly, we provide a theoretical analysis for our proposed scheme and show that the solution trajectory does converge to a non-constant steady point. Thirdly, we derive the connection of our energy model for pseudo-labelling. We show that our energy model produces more meaningful pseudo-labels than the ones generated directly by a deep network. We extensively evaluate our framework, through numerical and visual experiments, using six benchmarking datasets for natural and medical images. We demonstrate that our technique reports state-of-the-art results for semi-supervised classification.
\end{abstract}

\begin{IEEEkeywords}
Semi-Supervised Learning, Energy Models, Graph Laplacian, Pseudo-Labelling, Image Classification,  Deep Learning
\end{IEEEkeywords}}

\maketitle

\IEEEdisplaynontitleabstractindextext

\IEEEpeerreviewmaketitle

\IEEEraisesectionheading{\section{Introduction}\label{sec:introduction}}
\IEEEPARstart{I}n this era of big data, deep learning (DL) has reported astonishing results for different tasks in computer vision.
For the task of image classification, a major breakthrough has been reported in the setting of supervised learning. In this context, the majority of methods are based on deep convolutional neural networks e.g.~\cite{simonyan2014very,he2016deep}, in which pre-trained, fine-tuned and trained from scratch solutions have been considered. A key factor for these impressive results is the assumption of a large and well-representative corpus of labelled data. These labels can be generated either by humans or automatically on proxy tasks.
However, obtaining well-annotated labels is expensive, time consuming and one should account for the inherent human bias and uncertainty that adversely effect the classification output. These drawbacks have motivated semi-supervised learning (SSL)~\cite{chapelle2006semi,zhu2009introduction} to be a great focus of interest for the community.

The key idea of SSL is to leverage on a tiny labelled set and a large unlabelled set to produce a good classification output. The desirable advantages of this setting is that one can decrease the dependency on a large amount of well-annotated data whilst gaining further understanding of intrinsic data structures~\cite{chapelle2006semi}.
The body of literature has reported promising results, from the classic perspective, for semi-supervised classification using both  transductive (e.g~\cite{zhu2002learning,zhou2004learning,wang2008graph,zhu2002learning5,joachims2003transductive,hein2010inverse,zhang2011fast}) and inductive (e.g.~\cite{zhu2005harmonic,delalleau2005efficient,belkin2006manifold}) philosophies. Those techniques seek to infer the labels for the large unlabelled set, relying solely on the tiny labelled set as prior, by minimising a given energy (i.e., energy models~\cite{chapelle2006semi,zhu2002learning,zhou2004learning}).  More recently, deep learning has also been applied in the SSL context - examples are~\cite{rasmus2015semi,laine2016temporal,tarvainen2017mean,miyato2018virtual}, where strong augmentations and costly optimisation schemes are key for the outstanding performance. Both perspectives have shown  potentials but they still have limitations. Whilst energy models rely on hand-crafted features that are difficult to generalise, deep learning techniques lack of a well-defined~theory.

A  few recent works~\cite{iscen2019label,sellars2020two,sellars2021laplacenet} have  attempted to combine both perspectives,  so-called \textit{hybrid models}, where the principles of energy models and deep learning are combined. Hybrid models have demonstrated   performance which readily competes against deep learning techniques~\cite{sellars2021laplacenet}. However, similarly to deep learning techniques, hybrid models have been investigated mainly from the practical point of view. That is, in the context of hybrid models \textit{not much effort has been spent on developing better energy functionals and analysing their theoretical properties.} This is the motivation that drives the basis for this work.

More precisely, in this work we propose a robust graph energy for semi-supervised classification following a hybrid setting.
We focus on the  normalised Dirichlet energy~\eqref{model0} based on the graph Laplacian. Promising results have already been shown in this context. For example, the seminal algorithmic approach of~\cite{zhou2004learning} was introduced to perform a graph transduction, through the propagation of few labels, by minimising the Laplacian graph energy~\eqref{model0} for the specific case when $p=2$. Subsequent machine learning studies showed that using non-smooth energies with the $p=1$ norm, related to non-local total variation, can achieve better clustering performances~\cite{HH}, \textit{but original algorithms only approximated}~$p\to 1$.
More advanced optimisation tools were therefore proposed to consider the exact $p=1$ norm for binary~\cite{hein2013total} or multi-class~\cite{bresson2013multiclass} graph transduction. The normalisation of the operator is nevertheless crucial, as underlined in~\cite{vonLuxburg2007}, to ensure within-cluster similarity when the degrees $d_i$ of the nodes  are broadly distributed in the graph. These motivations drive our approach using the normalised Dirichlet energy~\eqref{model0} based on the graph~Laplacian.

\textbf{Contributions.} Our work is motivated by the problems of network calibration and the confirmation bias in pseudo-labelling~\cite{dawid1982well,niculescu2005predicting,guo2017calibration,arazo2020pseudo}, where one seeks that the probability of the predicted label reflects the ground truth correctness likelihood. In particular, in the context of deep learning
a current major family of techniques is pseudo-labelling. In this perspective,  a current challenge is how to improve poorly calibrated networks for better pseudo-labels~\cite{guo2017calibration,rizve2020defense}. In this work, we  argue that energy models can be a powerful alternative for inferring more meaningful pseudo-labels than the ones directly generated from a deep network.
With this motivation in mind, we propose a hybrid framework for semi-supervised classification called {CREPE Model (1-Lapla\textbf{C}ian g\textbf{R}aph \textbf{E}nergy for \textbf{P}seudo-lab\textbf{E}ls)}. The core of our proposal is a novel 1-Laplacian graph energy for inferring more certain pseudo-labels, these are then intertwined in an alternating optimisation scheme with a deep network for updating the graph. Our contributions are:

\medskip
\begin{compactitem}

   \item[\faPaperPlaneO]  We propose a hybrid framework for semi-supervised classification, in which we highlight:

    \smallskip
    $-$ A new  energy model based on the normalised and non-smooth Dirichlet energy \eqref{model0} based on the graph Laplacian, where we consider the exact    $p=1$ norm (energy functional~\eqref{func_2} following our scheme~\eqref{pde2}).
    Our functional is based on a carefully chosen class priors to enforce a sufficiently smooth solution, and to strengthen the intrinsic relation between the labelled and unlabelled~data.

    \smallskip
    $-$ We provide a convergence analysis of our model, and show that the solution trajectory  does indeed converge to a non-constant steady point (Proposition~\ref{prop_binary}, Proposition~\ref{prop_bin_conv}). Moreover, we provide a simple yet effective coupling constraint between labels for multi-class problems (Section~\ref{Sec::CouplingConst}).

    \smallskip
    $-$ We apply our results (Proposition~\ref{prop2}) and derive the connection of our energy model with the principles of pseudo-labelling. We then show how our graph based pseudo-labels can be iteratively updated, in an alternating optimisation scheme,  with a deep network.

    \item[\faPaperPlaneO] We extensively evaluate our model, with numerical and visual results, using six benchmarking  datasets from medical and natural images: CIFAR-10/100, Chest-Xray14, CBIS-DDSM,  Fashion-MNIST and Mini-ImageNet. We demonstrate that our  technique is able to generalise well to these diverse set-ups, and provides readily competing results against supervised techniques and state-of-the-art results for  semi-supervised classification.
\end{compactitem}

\begin{figure}[t!]
    \centering
    \includegraphics[width=0.48\textwidth]{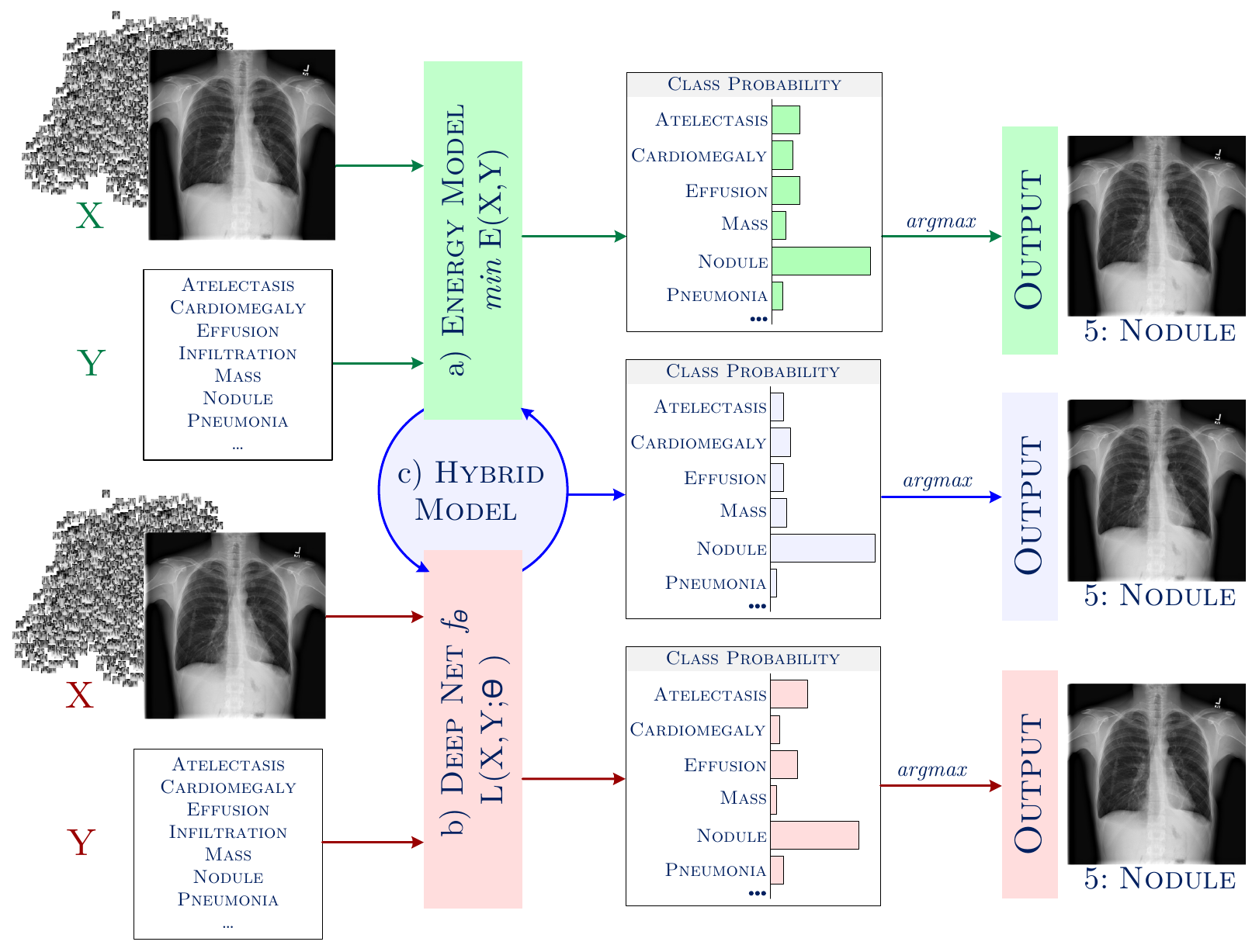}
    \caption{Three major semi-supervised categories in image classification. a) Energy models  seek to minimise a given energy (a maximum in probability) to infer the labels for the unlabelled set. b) Deep learning techniques aim to predict the unlabelled samples  solely relying on deep networks. c) Hybrid techniques use  principles from both energy models and deep networks. }
    \label{fig:sslcategories}
\end{figure}

\section{Related Work}
The problem of classifying images with scarce annotations has been extensively investigated in the machine learning community.
In the literature, semi-supervised learning (SSL)
can be broadly  divided into three categories: energy models (classic techniques) e.g.~\cite{zhu2002learning,zhou2004learning,hein2010inverse,zhang2011fast}, deep-learning based methods e.g.~\cite{laine2016temporal,tarvainen2017mean,miyato2018virtual, sohn2020fixmatch}, and hybrid techniques e.g.~\cite{iscen2019label,sellars2020two,sellars2021laplacenet}. An illustration of each category is displayed in Fig.~\ref{fig:sslcategories}.
These categories can be split by different perspectives including graph-based techniques, generative models, pseudo-labelling and consistency regularisation. In this section, we review the existing techniques.

\smallskip
\textbf{Energy Models for Semi-Supervised Classification.} Se\-mi-super\-vised classification has been extensively investigated in the literature, where the underpinning theory of this paradigm has been actively  developed since early works e.g.~\cite{merz1992semi,castelli1995exponential,vapnik1998statistical,joachims1999transductive,chapelle2006semi}. The solid foundations  have been strongly driven by the practical interest of relying less on labelled data in real-world applications such as text analysis~\cite{Nigam98,joachims1999transductive}.
A first family of techniques developed in the area is the energy models~\cite{chapelle2006semi,zhu2002learning,zhou2004learning}, where the main idea is to minimise a given energy (a maximum in probability) to infer the labels from the huge amount of unlabelled data using as prior a tiny labelled set. An illustration of this class of techniques is displayed in Fig.~\ref{fig:sslcategories}-a.
The term \textit{energy models} has been largely used in mathematics and physics for years, and since the early developments in semi-supervised learning e.g.~\cite{chapelle2006semi,zhu2002learning,zhou2004learning}.
There are several perspectives under this family of techniques including
generative models e.g.~\cite{kemp2004semi,grandvalet2005semi,adams2009archipelago} and low-density separation approaches e.g.~\cite{joachims1998making,joachims1999transductive,xu2008efficient,vapnik2013nature}. Besides these techniques another large subfamily of techniques is graph based approaches which is the focus of our interest.

Several techniques have been reported following the graph perspective including  random walks e.g.~\cite{szummer2002partially,zhou2004learningb}, harmonic based energy e.g.~\cite{zhu2002learning}, graph mincut e.g.~\cite{blum2001learning,joachims2003transductive,blum2004semi}, and spectral techniques e.g~\cite{belkin2002semi,chapelle2003cluster}. In most recent works,  the authors of~\cite{jung2016semi} used a sparse variant of label propagation under the condition that initial labels are in the proximity of the cluster boundaries. A weighted nonlocal Laplacian energy was introduced in~\cite{shi2017weighted}, where the authors enforce preservation of the symmetry of the Laplace operator. A kernel clustering approach was used in~\cite{mai2018random,mai2021consistent}  as an approach for Laplacian regularisation. The Poisson equation on a graph was used in~\cite{calder2020poisson}  for low label rates  classification.

\smallskip
\textbf{Deep Semi-Supervised Techniques.}
The power of deep learning has been recently applied for semi-supervised classification, which leads the current state-of-the-art performance. A visualisation of this family of techniques is displayed in Fig.~\ref{fig:sslcategories}-b).
There exist two major families of techniques in modern semi-supervised techniques: \textit{consistency regularisation} (aka perturbation-based methods) e.g.~\cite{laine2016temporal,sohn2020fixmatch,xie2021muscle} and \textit{pseudo-labelling} e.g.~\cite{lee2013pseudo,arazo2020pseudo,hu2021simple}. Consistency regularisation techniques work under the assumption that the model's performance (output $g(X_u)$, where $X_u$ is the unlabelled data) should not change under any induced $\tau$-perturbation -- that is: $g(X_u) =  g(X_u+\tau)$. Following this principle, several techniques have been proposed including the works of~\cite{laine2016temporal,tarvainen2017mean,miyato2018virtual,athiwaratkun2018there,verma2019interpolation,ke2019dual,hu2021simple}. A major challenge on these approaches is  how to set $\tau$. Different strategies for $\tau$ have been considered in the literature,  including mixup augmentations e.g.~\cite{zhang2018mixup}, generative augmentations e.g.~\cite{miyato2018virtual} and SOTA augmenters e.g.~\cite{cubuk2018autoaugment, cubuk2020randaugment}. The core of the performance, of this family of technique, is the use of costly optimisation schemes (e.g. more than 1M training iterations) along  with strong augmentations.

The second large family of deep semi-supervised techniques is pseudo-labelling introduced by Lee in~\cite{lee2013pseudo}. The idea of pseudo-labelling is to generate proxy labels to guide the learning process. Different techniques have been proposed to improve the performance of pseudo-labelling. The use of mix-max feature regularisation was presented in~\cite{shi2018transductive}. The authors of~\cite{li2020density} proposed a density aware mechanism for improving feature learning and pseudo-label generation. Label propagation using the graph Laplacian with the $p=2$ case have been proposed in~\cite{iscen2019label} and in combination with clustering regularisation in~\cite{sellars2020two}. Mixup has been shown to offer good performance along with small labels per mini-batch~\cite{arazo2020pseudo}, and together with graph based pseudo-labels~\cite{sellars2021laplacenet}. Certainty mechanisms have also been proposed  to improve pseudo-labelling~\cite{rizve2020defense,hu2021simple}.

\smallskip
\textbf{Hybrid Techniques and Comparison to our Work.} Whilst existing techniques either are energy models or deep learning techniques, works simultaneously using these principles, called ~\textit{hybrid techniques}, are very recent and scarce (see Fig.~\ref{fig:sslcategories}-c).
The existing works are under the family of pseudo-labelling techniques, where energy models have been used for improving performance.
The work of~\cite{iscen2019label} adapted the energy model of~\cite{zhou2004learning} to an inductive framework using modern deep networks. The same energy model was used in~\cite{sellars2020two} along with clustering regularisation. Most recently, the work of \cite{sellars2021laplacenet} showed high performance by using the energy model of~\cite{zhou2004learning} along with a multi-sampling augmentation strategy.

The existing hybrid models have a commonality that is the use of the energy model of~\cite{zhou2004learning}, where one seeks to minimise~\eqref{model0} for the specific case $p=2$. The focus of existing works is not the energy model part but rather the development of mechanisms for improving the network performance. In contrast to those works, our approach centers in developing better energy functionals and their theoretical properties to help the learning process. Moreover, existing works have not investigated how more robust energy models impact the performance.
\section{Preliminaries}
This  work addresses the problem of semi-supervised classification. In particular, we follow the graph based perspective in semi-supervised learning. Formally, we aim at solving the following problem.

\textbf{Problem Statement.} Given a set of samples $X=(x_1,...,x_l,x_{l+1},...,x_n) \text{ where } x_i\in\mathcal{X}$, we assume that a tiny subset is labelled $D_L =\{ (x_i ,y_i)  \}_{i=1}^{l}$ with provided labels  $\{y_i\}_{i=1}^{l} \in \mathcal{L}= \{1,..,L\}$ for $L$ classes, and a large subset is unlabelled  $X_u =\{ x_i \}_{i=l+1}^{n}$ .
We then seek to infer a function $f: \mathcal{X} \mapsto \mathcal{L}$ such that $f$ gets a good estimate for $\{ x_i \}_{i=l+1}^{n}$ with minimum generalisation error.

\begin{figure}[t!]
    \centering
    \includegraphics[width=0.48\textwidth]{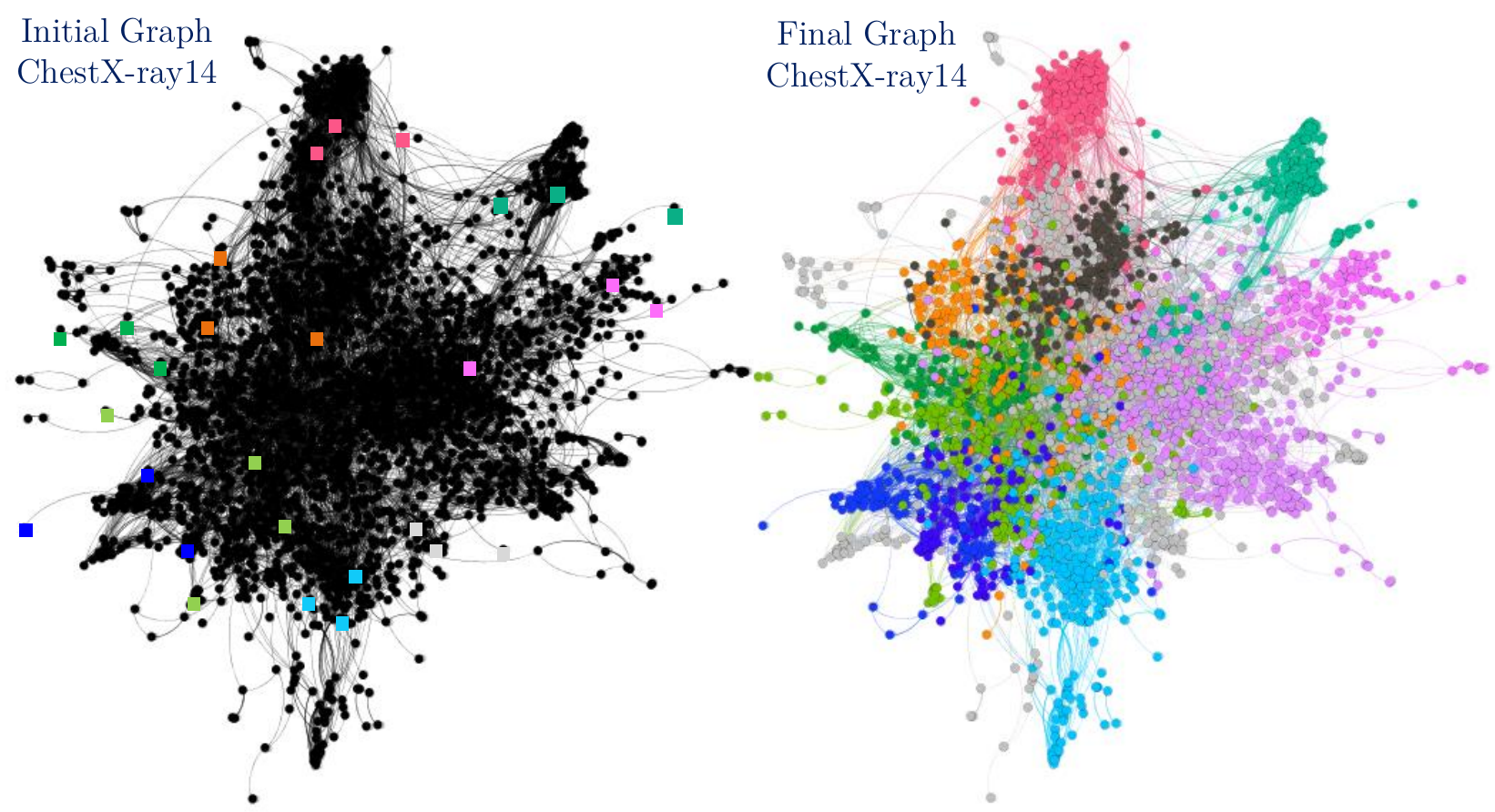}
    \caption{Visual illustration of our graph based energy that seeks to infer better pseudo-labels. Left figure displays the initial graph where only a tiny label set is given as prior. Each colour represents a different class (few given labels per class, e.g. \crule[blue]{0.2cm}{.2cm}='Mass', \crule[orange]{0.2cm}{.2cm}='Nodule', using the ChestXray-14 dataset) and black nodes denotes the unlabelled data.  Right side shows the inferred pseudo-labels using our energy model.}
    \label{fig:graphVis}
\end{figure}

We address this problem from the hybrid perspective (see Fig.~\ref{fig:sslcategories}-c), where one seeks to  combine principles from energy models and deep networks. In particular, our work focus on a hybrid technique from the pseudo-labelling perspective. In deep semi-supervised classification for pseudo-labelling, the main goal is to solve a loss that relates the labelled  and unlabelled  sets,  whose general form reads:
\begin{equation} \label{generalSSL}
  \min_{\theta}{{\sum_{(x,y)\in D_L }\mathcal{L_S}(x,y;\theta)}} + \gamma {{\sum_{x,\in X_u } \mathcal{L_U}(x,\hat{y};\theta)}},
\end{equation}

\noindent
where the two terms  $\mathcal{L_S}$ and $\mathcal{L_U}$ handle the labelled and unlabelled set respectively, $\theta$ is the network parameters, $\hat{y}$ are pseudo-labels,  and $\gamma$ a positive parameter weighting the importance of each term. In the body of literature, the main difference between existing works is the way to define $\mathcal{L_U}$, for example using a pseudo-labelling strategy or consistency regularisation.

A current major challenge is how to better handle issues relating to network calibration and confirmation bias e.g.~\cite{dawid1982well,niculescu2005predicting,guo2017calibration,arazo2020pseudo}. In the context of pseudo-labelling, hybrid techniques e.g.~\cite{iscen2019label,sellars2020two,sellars2021laplacenet} have shown that one can mitigate such issues by inferring pseudo-labels from an energy model and then combine them with deep networks rather than predict them directly from a deep network. However, existing hybrid techniques have only focused on designing  mechanism for the networks and the investigation of better energy models are to be investigated.

\textit{With previous motivation in mind, we seek to design better funtionals for improving the inference of pseudo-labels.} To do this, we follow a  graph based perspective for inferring more certain pseudo-labels. In this work, we consider functions $u\in\R^n$, defined over a set $\mathcal{N}$ of $n$ nodes. Our main points of interest  are convex and absolutely $p$-homogeneous (i.e. $J(\alpha u)=|\alpha|^pJ(u)$) non-local functionals, defined on $u$, of the particular form:

\begin{equation}\label{model0}
D_p(u)=\sum_{ij}w_{ij}\left|\left| \frac{u_i}{d_i^{1/p}}-\frac{u_j}{d_j^{1/p}}\right|\right|^p,
\end{equation}

\noindent
with weights $w_{ij}=w_{ji}\geq 0$ taken such that the vector $d\in\R^n$ has non-null entries satisfying: $d_i=\sum_jw_{ij}>0$.
This energy acts on the graph defined by nodes $\mathcal{N}$ and weights $w_{ij}$.
With respect to classical Dirichlet energies associated to the graph $p$-Laplacian \cite{andreu2008nonlocal,elmoataz2008nonlocal,hein2013total,bresson2013multiclass}, it includes a normalisation through rescaling with the degree of the node.
In this work, we focus our attention to the non smooth case $p=1$ with  the absolutely one homogeneous energy defined by the function $J(u)=D_1(u)$ that can be rewritten as:

\begin{equation}\label{model}J(u)=\normu{WD^{-1}u},
\end{equation}
with an $n\times n$ diagonal matrix $D=\diag(d)$, containing the nodes degree so that $d=D\Un_n$, and an $m\times n$ matrix  $W$ that encodes the $m$ edges in the graph. Each of these edges is represented on a different line of the sparse matrix $W$ with the value $w_{ij}$ (resp. $-w_{ij}$) on the column $i$ (resp. $j$).

\smallskip
\textbf{Subdifferential.}
Let us first define as
$\partial J$ the set of possible subdifferentials of $J$: $\partial J=\{\subgradp,\, \textrm{ s.t. } \exists u,\, \textrm{ with } \subgradp\in\partial J(u)\}$.
Any  absolutely one homogeneous function $J$  checks:
\begin{equation}\label{proper_1hom}
\begin{split}
J(u)&=\sup_{\subgradp\in\partial J} \langle \subgradp,u\rangle\\
\end{split}
\end{equation}

\noindent
so that $J(u)=\langle \subgradp,u\rangle,\, \forall \subgradp\in\partial J(u)$.

For the particular function $J$ defined in \eqref{model}, we can observe that

\begin{equation}\label{sousdiff}
\subgradp\in\partial J\Leftrightarrow \subgradp=D^{-1}W^\top z,\,\textrm{ with }||z||_\infty\leq 1.
\end{equation}

\noindent
Considering the finite dimension setting, there exists $L_J<\infty$ such that $\normd{\subgradp}<L_J$, $\forall \subgradp\in\partial J$.
We also have the following property.

\begin{prop}\label{prop_sub}
For all $\subgradp\in \partial J$, with $J$ defined in \eqref{model}, one has
$$\langle \subgradp,d\rangle=0.$$
\end{prop}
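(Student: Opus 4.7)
The plan is to use the explicit description of $\partial J$ given in \eqref{sousdiff} and unfold the pairing $\langle p,d\rangle$ by pushing the matrices across the inner product. Concretely, I would take an arbitrary $p\in\partial J$, write $p=D^{-1}W^\top z$ for some $z$ with $\normi{z}\leq 1$, and transpose to get
\begin{equation*}
\langle p,d\rangle = \langle D^{-1}W^\top z,\,d\rangle = \langle z,\,WD^{-1}d\rangle.
\end{equation*}

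Next, I would exploit the definition of $D$. Since $D=\diag(d)$, the vector $D^{-1}d$ is simply the all-ones vector $\Un_n$ (this is exactly the identity $d=D\Un_n$ noted just before \eqref{model}, read backwards). So the problem reduces to showing that $W\Un_n=0$.

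This last identity is a direct consequence of how the incidence matrix $W$ is built: according to the description following \eqref{model}, each row of $W$ corresponds to an edge $ij$ and has exactly two nonzero entries, namely $+w_{ij}$ in column $i$ and $-w_{ij}$ in column $j$. Summing the entries along any such row yields $w_{ij}-w_{ij}=0$, i.e.\ $W\Un_n=0$. Plugging this into the expression above gives $\langle p,d\rangle=\langle z,0\rangle=0$, as required.

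There is essentially no obstacle: the whole statement is a structural consistency check that the normalisation by $D$ in \eqref{model} is compatible with the kernel direction $\Un_n$ of the incidence operator $W$. The only thing to be mindful of is the order of matrix transpositions when moving from $\langle D^{-1}W^\top z,d\rangle$ to $\langle z,WD^{-1}d\rangle$, which is legitimate because $D$ is diagonal (hence symmetric) and the inner product is the standard Euclidean one on $\R^n$.
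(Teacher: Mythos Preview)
Your proof is correct and follows essentially the same route as the paper: both reduce $\langle p,d\rangle$ to $\langle z,W\Un_n\rangle$ and then argue that $W\Un_n=0$. The paper writes $d=D\Un_n$ so that $\langle D^{-1}W^\top z,D\Un_n\rangle=\langle W^\top z,\Un_n\rangle$ and then expands the sum invoking the symmetry $w_{ij}=w_{ji}$; your version via $D^{-1}d=\Un_n$ and the observation that each row of the incidence matrix has entries $+w_{ij},-w_{ij}$ summing to zero is slightly more direct (and in fact does not even need the symmetry assumption), but the substance is the same.
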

\begin{proof}
Observing that  $d=D\Un$ and using \eqref{sousdiff} we have that $\exists z\in\R^m$ such that
$$\langle \subgradp,d\rangle= \langle D^{-1}W^\top z, D\Un_n\rangle.$$
Since the weights satisfy $w_{ij}=w_{ji}$, then for all $z\in\R^m$:
\begin{equation*}
\begin{split}
&\langle W^\top z,\Un_n\rangle=\sum_i \sum_j w_{ij}(z_i-z_j)= \\
&\sum_{i}\sum_{j>i}w_{ij}(z_i-z_j-z_i+z_j)=0.
\end{split}
\end{equation*}
\end{proof}

\smallskip
\textbf{Eigenfunctions.}
Eigenfunctions of any convex functional $J$ satisfy $\lambda u\in\partial J(u)$.
For $J$ being the nonlocal total variation, (i.e. when $d_i$ is constant), eigenfunctions are known to be essential tools to provide a relevant clustering of the graph~\cite{vonLuxburg2007}.
Methods \cite{HH,NIPS2012_4726,bresson2013adaptive,benning2017learning,AGP18} have thus been designed to estimate such eigenfunctions through the local minimisation of the Rayleigh quotient, which reads:
\begin{equation}\label{RQ}\min_{\normd{u}=1}\frac{J(u)}{H(u)},\end{equation}
with another absolutely one homogeneous function $H$, that is typically a norm.
Taking $H(u)=\normd{u}$ as the $\ell_2$ norm, one can recover eigenfunctions of $J$.
For $H(u)=\normu{u}$ being the $\ell_1$ norm, one can also compute bi-valued functions $u$ that are local minima  of \eqref{RQ} and eigenfunctions of $J$  \cite{Feld}.
Being bivalued, these estimations can easily be used to realise a partition of the domain. These schemes also relate to the Cheeger cut of the graph induced by nodes $u_i$ and edges $w_{ij}$. Balanced cuts can also be obtained by considering $H(u)=\normu{u-\median(u)}$  \cite{bresson2013multiclass}.

A last point to underline comes from Proposition \ref{prop_sub} that states that eigenfunctions $\lambda u\in\partial J(u)$ should be orthogonal to $d$. It is thus important to design schemes that ensure this property.

\begin{figure}[t!]
    \centering
    \includegraphics[width=0.48\textwidth]{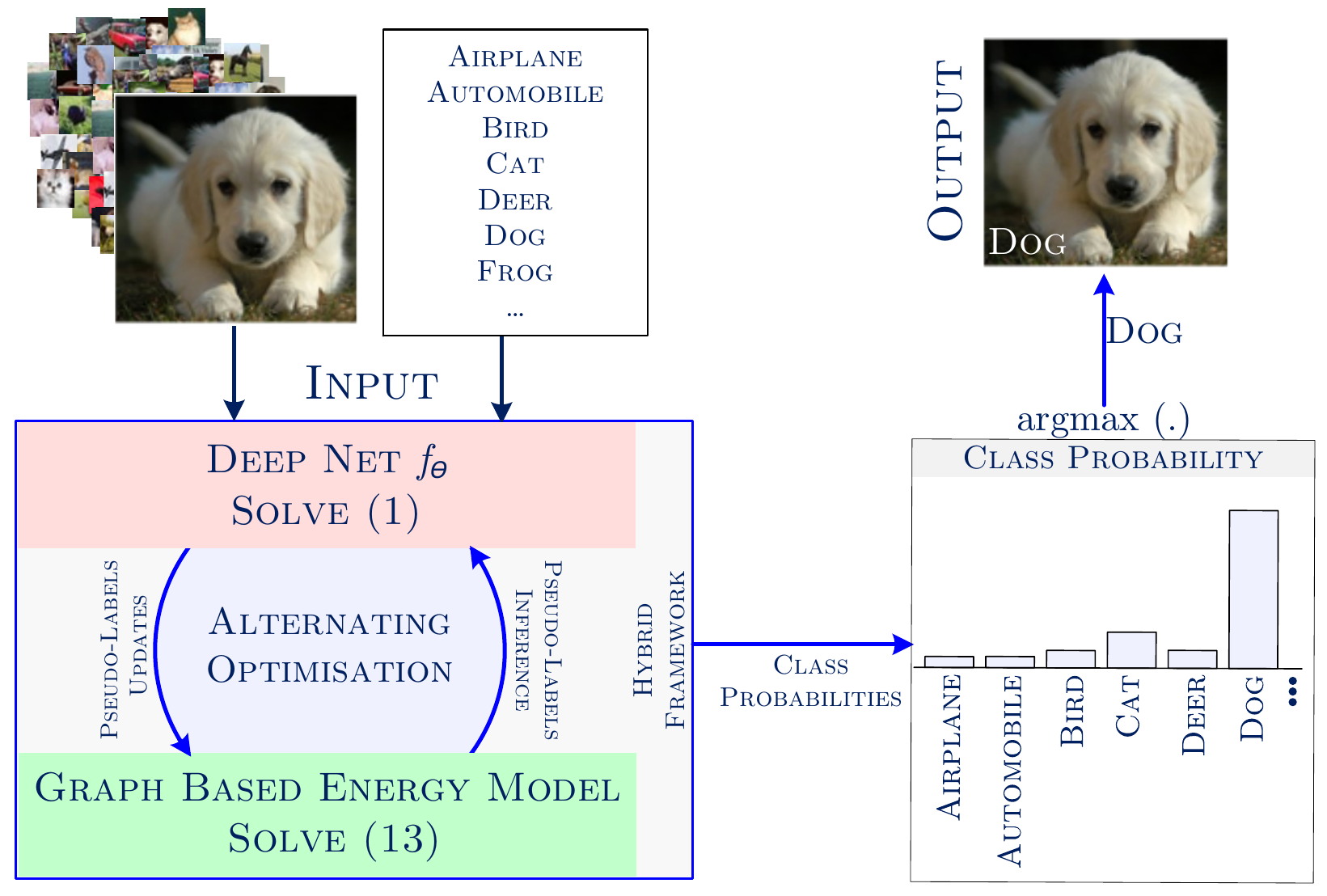}
    \caption{Visual illustration of our hybrid framework. The core of our technique is a graph based energy model for inferring better pseudo-labels (see green box). These are then updated in an alternating optimisation fashion through a deep net (red box) for boosting the classification performance. The output is assigned as the one with the highest probability. }
    \label{fig:hybridFramework}
\end{figure}

\section{Improving Pseudo-Labelling with the  1-Laplacian Graph Energy}
This section describes our proposed energy model that fits into a hybrid framework that we called {CREPE Model (1-Lapla\textbf{C}ian g\textbf{R}aph \textbf{E}nergy for \textbf{P}seudo-lab\textbf{E}ls)}. As illustrated in Fig.~\ref{fig:hybridFramework}, there are two main components in hybrid techniques: a deep network and an energy model. In this work and unlike existing hybrid techniques that focused on designing better mechanisms for improving the network performance, we focus on developing better energy functionals and their theoretical properties (see green box  from Fig.~\ref{fig:hybridFramework}).  This section describes three key parts: i) the convergence of our energy model, ii) the definition of our coupling constrain for the multi-class problem and iii) our multi-class energy flow for pseudo-labelling.

\textit{Core Idea.} We seek to infer better pseudo-labels using an energy model (outside a deep network) instead of generating them directly from a deep network.  To do this, we introduce a 1-Laplacian graph energy (see green box  from Fig.~\ref{fig:hybridFramework}), which is detailed next.

\subsection{Convergence Analysis}
In the following,  we will denote the value of function $u$ at node $x$ by $u(x)$ and the value of $u$ at iteration $k$ as $u_k$.
In order to realise a binary partition of the domain of the graph $\mathcal{N}$ through the minimisation of the quotient $R(u)=J(u)/H(u)$, we adapt the method of~\cite{Feld} to incorporate the scaling $d(x)$ of \eqref{model} and consider the semi-explicit scheme:

\begin{equation}\label{pde}
\left\{\begin{array}{ll} \frac{u_{k+1/2}-u_k}{\delta t}&=\frac{J(u_k)}{H(u_k)} (q_k-\tilde q_k)-\subgradp_{k+1/2},\\
u_{k+1}&=\frac{u_{k+1/2}}{\normd{u_{k+1/2}}}
\end{array}\right.
\end{equation}
with $\subgradp_{k+1/2}\in\partial J(u_{k+1/2})$,  $q\in\partial H(u_k)$,  $\tilde q_k=\frac{\langle d,q_k\rangle}{\langle d,d\rangle}d$. We recall that both $J$ and $H$ are absolutely one homogeneous and satisfy \eqref{proper_1hom}.
Since $\langle \subgradp,d\rangle=0$, $\forall \subgradp\in\partial J$, the shift with $\tilde q_k$ is necessary to show the convergence of the scheme \eqref{pde}  as we have  $u_k\to u^*\Rightarrow \frac{J(u^*)}{H(u^*)} (q^*-\tilde q^*)=\subgradp^*$, for $\subgradp^*\in\partial J(u^*)$ and $q^*\in\partial H(u^*)$. \\
Such sequence $u_k$ satisfies the following properties.

\begin{prop}\label{prop_binary}
For $\langle u_0,d\rangle=0$, the trajectory $u_k$ given by \eqref{pde} satisfies:

\renewcommand{\labelenumi}{\AngieItem{\arabic{enumi}}}
\begin{enumerate}\setlength{\itemsep}{5pt}
\item  $\langle u_{k+1},d\rangle=0$, \AngieHighlight
\item $||u_{k+1/2}||_2\geq||u_k||_2$, \AngieHighlight
\item $R(u_k)$ is non increasing,  \AngieHighlight
\item $H(u_{k+1/2})\leq \kappa<+\infty$.  \AngieHighlight
\end{enumerate}
\end{prop}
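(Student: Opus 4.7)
The plan is to prove the four items in order, since each one builds on the previous. The one unifying trick is to take the inner product of the first line of \eqref{pde} against a well-chosen test vector ($d$ for item 1, $u_k$ for item 2, $u_{k+1/2}$ for item 3), and then exploit absolute one-homogeneity of $J$ and $H$ via \eqref{proper_1hom} together with Proposition \ref{prop_sub}.

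For item 1, I would test \eqref{pde} against $d$. By construction $\tilde q_k$ is the $\langle d,\cdot\rangle$-projection of $q_k$ on the line spanned by $d$, so $\langle q_k-\tilde q_k,d\rangle=0$; and $\langle p_{k+1/2},d\rangle=0$ by Proposition \ref{prop_sub}. Hence $\langle u_{k+1/2},d\rangle=\langle u_k,d\rangle$, and normalisation preserves this, so $\langle u_0,d\rangle=0$ propagates by induction. For item 2, I would test \eqref{pde} against $u_k$. Using item 1, the $\tilde q_k$ term vanishes; one-homogeneity gives $\langle q_k,u_k\rangle=H(u_k)$, so the first term on the right equals $J(u_k)$; and $\langle p_{k+1/2},u_k\rangle\leq J(u_k)$ by \eqref{proper_1hom}. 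The right-hand side is therefore $\geq 0$, hence $\langle u_{k+1/2},u_k\rangle\geq\normd{u_k}^2$, and Cauchy--Schwarz yields $\normd{u_{k+1/2}}\geq\normd{u_k}$.

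For item 3 I would test \eqref{pde} against $u_{k+1/2}$. Item 1 kills the $\tilde q_k$ contribution, $\langle p_{k+1/2},u_{k+1/2}\rangle=J(u_{k+1/2})$ by one-homogeneity, and item 2 ensures the left-hand side $\normd{u_{k+1/2}}^2-\langle u_k,u_{k+1/2}\rangle$ is non-negative. Rearranging,
\begin{equation*}
J(u_{k+1/2})\leq R(u_k)\,\langle q_k,u_{k+1/2}\rangle.
\end{equation*}
Since $q_k\in\partial H\subset\partial H$, the supremum characterisation \eqref{proper_1hom} gives $\langle q_k,u_{k+1/2}\rangle\leq H(u_{k+1/2})$, hence $R(u_{k+1/2})\leq R(u_k)$. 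Because $J$ and $H$ are one-homogeneous, the normalisation $u_{k+1}=u_{k+1/2}/\normd{u_{k+1/2}}$ leaves $R$ invariant, so $R(u_{k+1})\leq R(u_k)$.

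For item 4 I would first obtain a uniform bound on $\normd{u_{k+1/2}}$. The same finite-dimensional argument that gives $L_J<\infty$ with $\normd{p}\leq L_J$ for all $p\in\partial J$ also provides some $L_H<\infty$ bounding $\partial H$; since $\tilde q_k$ is an orthogonal projection of $q_k$, $\normd{\tilde q_k}\leq\normd{q_k}\leq L_H$. Combining with $\normd{u_k}=1$ (from the previous normalisation) and $R(u_k)\leq R(u_0)$ from item 3, the triangle inequality applied to \eqref{pde} yields
\begin{equation*}
\normd{u_{k+1/2}}\leq 1+\delta t\bigl(2\,R(u_0)\,L_H+L_J\bigr).
\end{equation*}
Finally, one more application of \eqref{proper_1hom} together with $\normd{q}\leq L_H$ gives $H(v)\leq L_H\normd{v}$ for every $v$, so $H(u_{k+1/2})\leq \kappa:=L_H\bigl(1+\delta t(2R(u_0)L_H+L_J)\bigr)<+\infty$. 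The main (mild) obstacle is to check that the hidden assumptions are harmless: one needs $H(u_k)>0$ so that $R(u_k)$ is well defined (which holds as long as $u_k$ is not a zero of $H$, e.g.\ not collinear to $d$ for the typical choices of $H$), and $\partial H$ bounded, which is automatic whenever $H$ is a norm, as in all the examples of \eqref{RQ} discussed above.
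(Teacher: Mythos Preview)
Your proof is correct and follows essentially the same strategy as the paper: test \eqref{pde} against $d$ for item~1, against $u_k$ for item~2, and use boundedness of $\partial J$, $\partial H$ together with item~3 and $\normd{u_k}=1$ for item~4. The only noteworthy variation is in item~3: you test \eqref{pde} against $u_{k+1/2}$ and invoke item~2, whereas the paper uses directly that $u_{k+1/2}$ minimises $F_k$ so that $F_k(u_{k+1/2})\le F_k(u_k)=0$; the latter route has the small advantage of producing the inequality $\frac{1}{2\delta t}\normd{u_{k+1/2}-u_k}^2+J(u_{k+1/2})\le R(u_k)H(u_{k+1/2})$ with the explicit quadratic term, which is exactly \eqref{tosum} and is reused verbatim in the proof of Proposition~\ref{prop_bin_conv}.
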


\smallskip
\begin{proof}
In this proof, we  use the fact that $u_{k+1/2}$ defined in \eqref{pde} is the unique minimiser of:

\begin{equation}\label{func_1}F_k(u)=\frac1{2\delta t}\normd{u- u_k}^2+R(u_k)\langle q_{k}-\tilde q_k,u\rangle+J(u).
\end{equation}

\medskip
For $\langle u_k,d\rangle=0$, we have
{\small
\begin{equation*}
\begin{split}
\langle u_{k+1/2},d\rangle&=\langle u_k,d\rangle+\delta t\left(R(u_k) \langle (q_k-\tilde q_k),d\rangle-\langle \subgradp_{k+1/2},d\rangle\right)\\
&=\delta t R(u_k)\left( \langle q_k,d\rangle-\frac{\langle d,q_k\rangle}{\langle d,d\rangle}\langle d,d\rangle\right)\\
&=0,
\end{split}
\end{equation*}}
where we used Proposition 1 in the right part of the previous relation to get $\langle \subgradp_{k+1/2},d\rangle=0$. \AngieHighlight
We conclude with the fact that $u_{k+1}$ is a rescaling of $u_{k+1/2}$.

\medskip
Since $H$ is a norm, it is absolutely one homogeneous and $q_k\in \partial H(u_k)\Rightarrow H(u_k)=\langle q_k,u_k\rangle$. Next, we observe that  $J(u_k)=\sup_{\subgradp\in\partial J}\langle \subgradp,u_{k}\rangle\geq  \langle \subgradp_{k+1/2},u_{k}\rangle$ and we get
{\small
\begin{equation*}
\begin{split}
\langle u_{k+1/2},u_k\rangle&=||u_k||^2_2+\delta t\left( R(u_k) \langle q_k-\tilde q_k,u_k\rangle-\langle \subgradp_{k+1/2},u_k\rangle\right)\\
&\geq||u_k||^2_2+\delta t\left(J(u_k) - R(u_k)\langle \tilde q_k,u_k\rangle-J(u_k)\right)\\
&\geq||u_k||^2_2-\delta tR(u_k)\frac{\langle d,q_k\rangle}{\langle d,d\rangle} \langle d,u_k\rangle\\
&\geq||u_k||^2_2.
\end{split}
\end{equation*}
}

We then conclude with the fact that $\langle u_{k+1/2},u_k\rangle\leq ||u_{k+1/2}||_2.||u_k||_2$.

\medskip
Since $\langle u_k,d\rangle=0$ for all $k$ and $\tilde q_k=\frac{\langle d,q_k\rangle}{\langle d,d\rangle} d$, then $\langle\tilde q,u_{k+1/2}\rangle=\langle\tilde q,u_{k}\rangle=0$. Next, we recall that $H(u_{k+1/2})=\sup_{q\in\partial H{.}}\langle q,u_{k+1/2}\rangle\geq  \langle q_{k},u_{k+1/2}\rangle$. Hence we have
{\small
\begin{equation}
\begin{split}
F_k(u_{k+1/2})\leq F(u_k)\\
\frac1{2\delta t}\normd{u_{k+1/2}- u_k}^2- R(u_k)\langle q_{k},u_{k+1/2}\rangle+J(u_{k+1/2})\leq 0\\
\frac1{2\delta t}\normd{u_{k+1/2}- u_k}^2+J(u_{k+1/2})\leq R(u_k)H(u_{k+1/2})\label{ineq}\\
R(u_{k+1/2})\leq R(u_k)\\
R(u_{k+1})\leq R(u_k)\\
\end{split}
\end{equation}
}
where the final rescaling with $\normd{u_{k+1/2}}$ is possible since $J$ and $H$ are absolutely one homogeneous functions.

\medskip
In the finite dimension setting,  there exists $\const_J,\const_H<\infty$ such that $||p||\leq \const_J$ and $||q||\leq \const_H$ for an  absolutely one homogeneous functionals $J$ defined in (1) and a norm $H$. Then one has

{\small
\begin{equation*}
\begin{split}
     u_{k+1/2}&=u_k+\delta t\left(\frac{J(u_k)}{H(u_k)} (q_k-\tilde q_k)-\subgradp_{k+1/2}\right)\\
     \normd{u_{k+1/2}}^2&=\langle u_k,u_{k+1}\rangle +\delta t \left(\frac{J(u_k)}{H(u_k)} \langle q_k,u_{k+1/2}\rangle-\langle \subgradp_{k+1/2}\rangle\right)\\
     \normd{u_{k+1/2}}^2&\leq \normd{u_{k+1/2}}\left(\normd{u_{k}}+\delta t\left(\frac{J(u_k)}{H(u_k)}\const_H+\const_J\right)\right)\\
     \normd{u_{k+1/2}}&\leq 1+\delta t\left(\frac{J(u_0)}{H(u_0)}\const_H+\const_J\right).
\end{split}
\end{equation*}
}
From the equivalence of norms in finite dimensions, there exists $0<\kappa<\infty$ such that $H(u_{k+1/2})\leq \kappa$.  
\end{proof}

\noindent
Hence, we can show the convergence of the trajectory.

\begin{prop}\label{prop_bin_conv}
The sequence $u_k$ defined in \eqref{pde} converges to a non-constant steady point $u^*$.
\end{prop}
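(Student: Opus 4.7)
The plan is to combine compactness of the unit sphere with a descent from the strongly convex proximal step to produce a limit, and then pass to the limit in the subdifferential inclusion built into \eqref{pde}. First, the normalisation $u_{k+1}=u_{k+1/2}/\normd{u_{k+1/2}}$ forces $\normd{u_k}=1$ for every $k\geq 1$, so the sequence lies in a compact subset of $\R^n$ and admits a convergent subsequence $u_{k_n}\to u^\ast$ with $\normd{u^\ast}=1$. The non-constancy of any such $u^\ast$ is essentially free from Property~1 of Proposition~\ref{prop_binary}: passing to the limit in $\langle u_{k_n},d\rangle=0$ gives $\langle u^\ast,d\rangle=0$, and since each $d_i>0$ a constant $u^\ast=c\Un_n$ would force $c\sum_i d_i=0$, contradicting $\normd{u^\ast}=1$.

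Second, to upgrade subsequential convergence to full convergence, I would extract a descent estimate from the fact, established in Proposition~\ref{prop_binary}, that $u_{k+1/2}$ is the unique minimiser of the $\tfrac{1}{\delta t}$-strongly convex functional $F_k$ of \eqref{func_1}. Strong convexity gives $F_k(u_k)-F_k(u_{k+1/2})\geq \tfrac{1}{2\delta t}\normd{u_{k+1/2}-u_k}^2$. Using $\langle d,u_k\rangle=\langle d,u_{k+1/2}\rangle=0$ (which follows from Property~1 and Proposition~\ref{prop_sub}) together with $\langle q_k,u_k\rangle=H(u_k)$ from \eqref{proper_1hom}, one checks $F_k(u_k)=2J(u_k)$. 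Combining this with the elementary bound $\langle q_k,u_{k+1/2}\rangle\geq H(u_k)-\normd{q_k}\normd{u_{k+1/2}-u_k}$, absorbing the linear term via Young's inequality, and invoking Properties~2 and~4 to control $\normd{q_k}$ and $H(u_{k+1/2})$ uniformly, one is led to a bound of the form $\normd{u_{k+1/2}-u_k}^2\leq C\bigl(R(u_k)-R(u_{k+1})\bigr)$. Since $R(u_k)$ is non-increasing and bounded below by $0$, telescoping yields $\sum_k\normd{u_{k+1/2}-u_k}^2<\infty$, so $u_{k+1/2}-u_k\to 0$; combined with $\normd{u_{k+1/2}}\geq 1$ and the explicit renormalisation, this also gives $u_{k+1}-u_k\to 0$.

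Finally, I would pass to the limit in \eqref{pde}. By \eqref{sousdiff} the subgradients $p_{k+1/2}\in\partial J(u_{k+1/2})$ are bounded by $L_J$, and $q_k\in\partial H(u_k)$ are uniformly bounded since $H$ is absolutely one-homogeneous. Along a subsequence, $p_{k_n+1/2}\to p^\ast$ and $q_{k_n}\to q^\ast$, and upper semicontinuity of the convex subdifferential in finite dimensions gives $p^\ast\in\partial J(u^\ast)$ and $q^\ast\in\partial H(u^\ast)$. Using $R(u_k)\to R^\ast$ by monotone convergence and $u_{k+1}-u_k\to 0$, the one-step inclusion becomes $R^\ast(q^\ast-\tilde q^\ast)=p^\ast$, which is precisely the steady-state relation of \eqref{pde}. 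The main obstacle is the middle step: turning $F_k(u_k)-F_k(u_{k+1/2})$ into a quantity that telescopes in $R$ requires the careful interplay of the $\ell_2$-normalisation and the $1$-homogeneity of $J$ and $H$ encoded in Properties~2 and~4, and the delicate part is getting both the sign and the uniform constant $C$ right; once that inequality is in place, compactness and the closed-graph property of convex subdifferentials deliver the steady point almost mechanically.
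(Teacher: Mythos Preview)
Your overall architecture matches the paper's: extract a descent inequality from the minimality of $u_{k+1/2}$ for $F_k$, telescope to get $\sum_k\normd{u_{k+1}-u_k}^2<\infty$, then use compactness of the unit sphere together with $\langle u_k,d\rangle=0$ and $d_i>0$ to rule out a constant limit. Your closing paragraph (closed-graph passage to the limit in the subdifferential inclusion) is actually more explicit than the paper, which simply cites an external reference at that point.

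The middle step, however, contains a concrete error and an unnecessary detour. First, $F_k(u_k)=0$, not $2J(u_k)$. The linear term in \eqref{func_1} must carry the sign $-R(u_k)\langle q_k-\tilde q_k,u\rangle$ for $u_{k+1/2}$ from \eqref{pde} to be its minimiser (compare the first-order condition with \eqref{pde}); with that sign, $F_k(u_k)=-R(u_k)H(u_k)+J(u_k)=0$. Your value $2J(u_k)$ comes from reading the opposite sign, but then the minimiser claim fails, so you cannot have both simultaneously. Second, your lower bound $\langle q_k,u_{k+1/2}\rangle\geq H(u_k)-\normd{q_k}\normd{u_{k+1/2}-u_k}$ points the wrong way for what is needed, and the Young-inequality absorption is a red herring. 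The clean route is the one-line \emph{upper} bound $\langle q_k-\tilde q_k,u_{k+1/2}\rangle\leq H(u_{k+1/2})$ (valid since $q_k\in\partial H$ and $\langle\tilde q_k,u_{k+1/2}\rangle=0$). Inserting this into $F_k(u_{k+1/2})\leq F_k(u_k)=0$ yields immediately
\[
\frac{1}{2\delta t\,H(u_{k+1/2})}\normd{u_{k+1/2}-u_k}^2+R(u_{k+1})\leq R(u_k),
\]
and then Property~4 ($H(u_{k+1/2})\leq\kappa$) plus the projection inequality $\normd{u_{k+1}-u_k}\leq\normd{u_{k+1/2}-u_k}$ give the summable bound you were aiming for, with $C=2\delta t\kappa$ and no auxiliary estimates needed.
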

\begin{proof}
As $u_{k+1/2}$ is the unique minimiser of $F_k$ in \eqref{func_1}, as $F_k(u_k)=0$, and as $\langle q_k-\tilde q _k,u_{k+1/2}\rangle\leq H(u_{k+1/2})$, we get:

\begin{equation}\label{tosum}
\frac1{2\delta t H(u_{k+1/2})}\normd{u_{k+1/2}- u_k}^2+R(u_{k+1})\leq R(u_{k}).
\end{equation}
Since $u_{k+1}$ is the orthogonal projection of $u_{k+1/2}$ on the $\ell_2$ ball then $\normd{u_{k+1}- u_k}\leq\normd{u_{k+1/2}- u_k}$.
Finally, from statement 4 of Proposition \ref{prop_binary}, we have that $1/H(u_{k+1/2})\geq 1/\kappa$.
We then sum relation \eqref{tosum} from $0$ to $K$ and deduce that:

$$\sum_{k=0}^K\frac1{2\delta t\kappa}\normd{u_{k+1}- u_k}^2\leq H(u_0),$$
so that $\normd{u_{k+1}- u_k}$ converges to $0$. Since all the quantities are bounded, we can show that up to a subsequence $u_k\to u^*$ (see \cite{Feld}, Theorem 2.1).

From Proposition \ref{prop_binary}, the points $u_k$ being of constant norm and $\langle d,u_k\rangle$ being zero (with positive weights $d_i$), the limit point $u^*$ of the trajectory \eqref{pde} necessarily has negative and positive entries.
\end{proof}

In practice, to realise a partition of the graph with the scheme \eqref{pde}, we  miniminise the functional \eqref{func_1} at each iteration $k$ with the primal-dual algorithm of~\cite{CP11} to obtain $u_{k+1/2}$, and then normalise this estimation.
As it is non-constant and satisfies $\langle u^*,d\rangle=0$, the limit of the scheme $u^*$ can be used for partitioning with the simple criteria $u^*>0$.

\subsection{Our Coupling Constraint}\label{Sec::CouplingConst}
As we consider a multi-class setting,  we  aim at finding $\valL$ coupled functions $u^\indexL$ that are all local minima of the ratio $J(u)/H(u)$. The issue is to define a  coupling constraint between the $u^\indexL$'s such that it is easy to project on. Let  ${\bf u}=[u^1,\cdots u^\valL]$, in this work we consider the following simple linear coupling, which reads:

\begin{equation}\label{constr_coupling}C:\{{\bf u},\, \textrm{s.t. } \sum_{\indexL=1}^\valL u^\indexL(x)=0,\, \forall x\in\mathcal{N}\}.\end{equation}

There are three main reasons for considering such coupling instead of classical simplex \cite{bresson2013multiclass,rangapuram2014tight,gao2015medical} or orthogonality \cite{dodero2014group} constraints:
\renewcommand{\labelenumi}{\AngieItemRed{\arabic{enumi}}}
\begin{enumerate} \setlength{\itemsep}{5pt}
    \item Projection on this linear constraint is explicit with a simple shift of the vector ${\bf u} (x)$ for each node $x$. On the other hand, simplex constraint ($u^\indexL(x)\geq 0$, $\sum_{\indexL} u^\indexL(x)=1$, $\forall x$) requires more expensive projections of the vectors ${\bf u} (x)$ on the $\valL$ simplex. Lastly, projection on the orthogonal constraint of the  $u^\indexL$'s is a non convex problem.  \smallskip \AngieHighlightB
    \item Contrary to the simplex constraint, it is compatible with the weighted zero mean condition $\langle u^\indexL,d\rangle$ that any eigenfunction of $J$ should satisfy, as shown in Proposition \ref{prop_sub}. \smallskip \AngieHighlightB
    \item The characteristic function of a linear constraint is absolutely one homogeneous. This leads to a natural extension of the binary case. \AngieHighlightB
\end{enumerate}

\subsection{Multi-Class Flow for Better Pseudo-Labelling}
In previous section, we provide the convergence analysis and coupling constraint of our energy model. In this section, we detail how these elements fit into our  new energy functional for pseudo-labelling.

We recall to the reader that we  consider the problem:

\begin{equation}\label{RQ2}\min_{\normd{{\bf u}}=1}\sum_{\indexL=1}^\valL\frac{J(u^\indexL)}{H(u^\indexL)}.\end{equation}

To find a local minima of \eqref{RQ2}, we  define our iterative multi-class energy functional, which reads:
\begin{multline}\label{func_2}
F^\valL_k({\bf u})=\frac1{2\delta t}\normd{{\bf u}-{\bf u}_k}^2-\sum_{\indexL=1}^\valL R(u_k^\indexL)\langle q_{k}^\indexL-\tilde q_k^\indexL,u^\indexL\rangle \\ +\sum_{\indexL=1}^\valL J(u^\indexL)+\chi_C({\bf u})
\end{multline}
where $q_k^\indexL\in \partial H(u_k^\indexL)$ and $\chi_C$ is the characteristic function of the constraints  \eqref{constr_coupling}.
Starting from an initial point ${\bf u}_0$ that satisfies the constraint ($\chi_C({\bf u}_0)=0$) and has been normalised ($\normd{{\bf u}_0}^2=\sum_{\indexL=1}^\valL\normd{u_0^\indexL}^2=1$), the scheme we consider reads:

{
\begin{equation}\label{pde2}
\left\{\begin{array}{ll}
{\bf u}_{k+1/2}&=\uargmin{{\bf u}}F^\valL_k({\bf u})\vspace{0.2cm}\\
{\bf u}_{k+1}&=\frac{{\bf u}_{k+1/2}}{\normd{{\bf u}_{k+1/2}}}.
\end{array}\right.
\end{equation}}

In practice, if for some $\indexL$, $u^\indexL_{k+1/2}$ vanishes, then we define $R(u^\indexL_{k+1})=0$ for the next iteration. With such assumptions, the sequence ${\bf u}_k$ have the following properties.

\begin{prop}\label{prop2}
For $\langle u_0^\indexL,d\rangle=0$, $\indexL=1\cdots \valL$, the trajectory ${\bf u}_k$ given by \eqref{pde2} satisfies
\renewcommand{\labelenumi}{\AngieItem{\arabic{enumi}}}
\begin{enumerate} \setlength{\itemsep}{5pt}
\item  $\langle u_k^\indexL,d\rangle=0$, \AngieHighlight
\item $||{\bf u}_{k}||_2\leq||{\bf u}_{k+1/2}||_2\leq \kappa<\infty$, \AngieHighlight
\item $\sum_{\indexL=1}^\valL H(u_{k+1}^\indexL) \left( R(u_{k+1}^\indexL) - R(u_k^\indexL)  \right)$ \\
\hspace{2cm} $\leq -\frac1{2\delta t\kappa} \normd{{\bf u}_{k+1}-{\bf u}_k}^2$. \AngieHighlight
\end{enumerate}
\end{prop}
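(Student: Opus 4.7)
The plan follows the pattern of Proposition \ref{prop_binary}: the engine is that $\mathbf{u}_{k+1/2}$ is the unique global minimiser of the strongly convex functional $F_k^\valL$ in \eqref{func_2} and that $F_k^\valL(\mathbf{u}_k)=0$. I would first verify this baseline identity by a short expansion: $\chi_C(\mathbf{u}_k)=0$ by hypothesis, $\langle \tilde q_k^\indexL, u_k^\indexL\rangle=0$ because $\langle u_k^\indexL,d\rangle=0$, the one-homogeneity of $H$ gives $\langle q_k^\indexL,u_k^\indexL\rangle=H(u_k^\indexL)$, and $R(u_k^\indexL)H(u_k^\indexL)=J(u_k^\indexL)$ cancels the remaining $J$-sum. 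From this baseline all three statements flow by testing the first line of \eqref{pde2} against carefully chosen vectors and exploiting the absolute one-homogeneity of $J$, $H$ and $\chi_C$.

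For statement 1 I would argue by induction on $k$. Taking $\langle\cdot,d\rangle$ of the update $u_{k+1/2}^\indexL=u_k^\indexL+\delta t(R(u_k^\indexL)(q_k^\indexL-\tilde q_k^\indexL)-p_{k+1/2}^\indexL-r_{k+1/2}^\indexL)$, the induction hypothesis kills the first term, Proposition \ref{prop_sub} gives $\langle p_{k+1/2}^\indexL,d\rangle=0$, the definition $\tilde q_k^\indexL=\frac{\langle d,q_k^\indexL\rangle}{\langle d,d\rangle}d$ yields $\langle q_k^\indexL-\tilde q_k^\indexL,d\rangle=0$, and by \eqref{sub_r} the subgradient $r_{k+1/2}^\indexL=\alpha_{k+1/2}$ does not depend on $\indexL$. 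This leaves $\langle u_{k+1/2}^\indexL,d\rangle=-\delta t\,\langle\alpha_{k+1/2},d\rangle$ for every $\indexL$; summing over $\indexL$ and invoking the coupling constraint $\sum_\indexL u_{k+1/2}^\indexL=0$ forces $\langle\alpha_{k+1/2},d\rangle=0$, so each inner product vanishes, and the subsequent normalisation by a scalar preserves this property.

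For statement 2 I would inner-product the same update with $u_k^\indexL$ and sum over $\indexL$. The $\alpha_{k+1/2}$-contribution drops because $\sum_\indexL\langle\alpha_{k+1/2},u_k^\indexL\rangle=\langle\alpha_{k+1/2},\sum_\indexL u_k^\indexL\rangle=0$ by the constraint at step $k$; combining $\langle q_k^\indexL-\tilde q_k^\indexL,u_k^\indexL\rangle=H(u_k^\indexL)$ with the subgradient inequality $\langle p_{k+1/2}^\indexL,u_k^\indexL\rangle\leq J(u_k^\indexL)$ then gives $\langle\mathbf{u}_{k+1/2},\mathbf{u}_k\rangle\geq \normd{\mathbf{u}_k}^2$, and Cauchy--Schwarz concludes $\normd{\mathbf{u}_{k+1/2}}\geq\normd{\mathbf{u}_k}$. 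For the upper bound $\kappa$ I would read off \eqref{pde2} directly: with $\normd{\mathbf{u}_k}=1$, all subgradients $p_{k+1/2}^\indexL, q_k^\indexL$ lie in compact subsets of $\R^n$, the ratios $R(u_k^\indexL)$ are bounded (adopting the paper's convention $R=0$ when the denominator vanishes), and summing \eqref{pde2} over $\indexL$ shows that $\alpha_{k+1/2}=\frac{1}{\valL}\sum_\indexL(R(u_k^\indexL)(q_k^\indexL-\tilde q_k^\indexL)-p_{k+1/2}^\indexL)$ is bounded too, so $\normd{\mathbf{u}_{k+1/2}}\leq 1+\delta t\,C$.

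For statement 3 I would exploit $F_k^\valL(\mathbf{u}_{k+1/2})\leq F_k^\valL(\mathbf{u}_k)=0$. Since $\chi_C(\mathbf{u}_{k+1/2})=0$, since statement 1 gives $\langle u_{k+1/2}^\indexL,d\rangle=0$ (hence $\langle q_k^\indexL-\tilde q_k^\indexL,u_{k+1/2}^\indexL\rangle = \langle q_k^\indexL,u_{k+1/2}^\indexL\rangle\leq H(u_{k+1/2}^\indexL)$ by the subgradient inequality for $H$), and since $J(u_{k+1/2}^\indexL)=R(u_{k+1/2}^\indexL)H(u_{k+1/2}^\indexL)$, this yields
\begin{equation*}
\frac{1}{2\delta t}\normd{\mathbf{u}_{k+1/2}-\mathbf{u}_k}^2+\sum_\indexL H(u_{k+1/2}^\indexL)\bigl(R(u_{k+1/2}^\indexL)-R(u_k^\indexL)\bigr)\leq 0.
\end{equation*}
Scale invariance of $R$ and one-homogeneity of $H$ convert $u_{k+1/2}^\indexL$ to $u_{k+1}^\indexL$ at the cost of a factor $\normd{\mathbf{u}_{k+1/2}}\leq\kappa$, and replacing $\normd{\mathbf{u}_{k+1/2}-\mathbf{u}_k}$ by the smaller $\normd{\mathbf{u}_{k+1}-\mathbf{u}_k}$ (projection onto the unit $\ell_2$-sphere on which $\mathbf{u}_k$ already sits) closes the estimate. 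The hard step I anticipate is statement 1: showing that the shared Lagrange multiplier $\alpha_{k+1/2}$ associated to the coupling $C$ is automatically compatible with the orthogonality $\langle\cdot,d\rangle=0$ that each $u^\indexL$ must satisfy; once this compatibility is in hand, statements 2 and 3 reduce to aggregating the single-class argument of Proposition \ref{prop_binary} across $\indexL$.
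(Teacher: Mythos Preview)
Your proposal is correct and follows essentially the same route as the paper's proof: the induction via $\langle\cdot,d\rangle$ plus the shared multiplier $\alpha$ for statement~1, the inner product against $\mathbf{u}_k$ and Cauchy--Schwarz for the lower bound in statement~2, and the comparison $F_k^\valL(\mathbf{u}_{k+1/2})\leq F_k^\valL(\mathbf{u}_k)=0$ followed by rescaling for statement~3, all match the supplementary proof. The one minor tactical difference is your upper bound $\kappa$ in statement~2: you sum \eqref{pde2} over $\indexL$ to solve explicitly for $\alpha_{k+1/2}$ and bound it, whereas the paper instead takes the inner product of \eqref{pde2} with $u_{k+1/2}^\indexL$, sums, and uses $\langle \mathbf{r}_{k+1/2},\mathbf{u}_{k+1/2}\rangle=\chi_C(\mathbf{u}_{k+1/2})=0$ together with the finite-dimensional bounds on $\partial J$ and $\partial H$; both reach the same $\kappa=1+\delta t\cdot\text{const}$, and your version is arguably cleaner.
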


\begin{proof}
The scheme reads
\begin{multline*}
\left\{\begin{array}{ll}
{\bf u}_{k+1/2}=\uargmin{{\bf u}}F^\valL_k({\bf u}):=\frac1{2\delta t}\normd{{\bf u}-{\bf u}_k}^2 +\sum_{\indexL=1}^\valL J(u^\indexL) \\ \hspace{0.5cm}-\sum_{\indexL=1}^\valL R(u_k^\indexL)\langle q_{k}^\indexL-\tilde q_k^\indexL,u^\indexL\rangle+\chi_C({\bf u})\vspace{0.3cm}\\
{\bf u}_{k+1}=\frac{{\bf u}_{k+1/2}}{\normd{{\bf u}_{k+1/2}}}.
\end{array}\right.
\end{multline*}
The Karush–Kuhn–Tucker conditions of the above problem states that there exist  $\subgradp^\indexL_{k+1/2}$ and $r_{k+1/2}$ such that
$$u_{k+1/2}^\indexL=u_k^\indexL+\delta t\left(R( u^\indexL_k)(q_{k}^\indexL -\tilde q^\indexL_k)-\subgradp^\indexL_{k+1/2}-r_{k+1/2}\right)$$
where $\subgradp^\indexL_{k+1/2}\in\partial J(u^\indexL_{k+1/2})$ and $r_{k+1/2}$ is a Lagrange multiplier independent of $\indexL$ for the linear constraint $\chi_C$. The point ${\bf u}_{k+1/2}$ in the above scheme corresponds to  the global minimiser of  $F^\valL_k({\bf u})$.

\renewcommand{\labelenumi}{\AngieItem{\arabic{enumi}}}
For $\langle u^\indexL_k,d\rangle=0$, and following point 1 of Proposition 2, we have  \AngieHighlight
\begin{equation*}
\begin{split}
\langle u_{k+1/2}^\indexL,d\rangle&=\langle u_k^\indexL,d\rangle+\delta t\big(R(u_k^\indexL) \langle (q_k^\indexL-\tilde q_k^\indexL),d\rangle\\
&-\langle \subgradp^\indexL_{k+1/2},d\rangle-\langle r_{k+1/2},d\rangle \big)\\
&=-\langle r_{k+1/2},d\rangle.
\end{split}
\end{equation*}
Next, as ${\bf u}_{k+1/2}\in C$, we have $\sum_\indexL u^\indexL_{k+1/2}(x)=0$, $\forall x\in\mathcal{N}$ and obtain:
\begin{equation*}
\begin{split}
\sum_{\indexL=1}^\valL\langle u_{k+1/2}^\indexL,d\rangle&=-\sum_{\indexL=1}^\valL\langle r_{k+1/2}, d\rangle\\
\sum_{\indexL=1}^\valL\sum_{x\in\mathcal{N}} u^\indexL _{k+1/2}(x)d(x)&=-\valL\langle r_{k+1/2}, d\rangle\\
\sum_{x\in\mathcal{N}} d(x)\left(\sum_{\indexL=1}^\valL u^\indexL _{k+1/2}(x)\right) &=-\valL\langle r_{k+1/2}, d\rangle\\
0&=\langle r_{k+1/2}, d\rangle.
\end{split}
\end{equation*}
\medskip
We have
\begin{equation*}
\begin{split}
\langle u^\indexL_{k+1/2},u^\indexL_k\rangle=||u^\indexL_k||^2_2+\delta t\left( R(u^\indexL_k) \langle q^\indexL_k-\tilde q^\indexL_k,u^\indexL_k\rangle-\right.\\
\left. \langle \subgradp^\indexL_{k+1/2},u^\indexL_k\rangle-\langle r_{k+1/2},u^\indexL_k\rangle\right).
\end{split}
\end{equation*}

\medskip
We follow the point 2 of Proposition 2 to first get:
$\langle u_{k+1/2}^\indexL,u_{k}^\indexL\rangle\geq \normd{u_k^\indexL}-\langle r^\indexL_{k+1/2},u_k^\indexL\rangle$, for $i=1\cdots n$. Then, as $\sum_\indexL  \langle r_{k+1/2},u^\indexL_k\rangle=  \sum_x  r_{k+1/2}(x)\sum_\indexL  u^\indexL_k(x)=0$, we deduce that
$\normd{{\bf u}_{k+1/2}}.\normd{{\bf u}_{k}}\geq  \sum_\indexL  \langle u^\indexL_{k+1/2},u^\indexL_k\rangle\geq\sum_\indexL  \langle u^\indexL_{k},u^\indexL_k\rangle=||{\bf u}_k||_2^2$.
Next we observe that

\begin{equation*}
\begin{split}
\normd{u^\indexL_{k+1/2}}^2 &=\langle u^\indexL_{k+1/2},u^\indexL_k\rangle+\delta t\left(-\langle r_{k+1/2},u^\indexL_{k+1/2}\rangle\right.\\
&\left.+R(u^\indexL_k) \langle q^\indexL_k-\tilde q^\indexL_k,u^\indexL_{k+1/2}\rangle -J(u^\indexL_{k+1/2})  \right).
\end{split}
\end{equation*}
Summing on $\indexL$, we get
\begin{equation*}
\begin{split}
\normd{{\bf u}_{k+1/2}}^2&\leq \normd{{\bf u}_{k+1/2}}\left( \normd{{\bf u}_{k}} +\delta t\left( \sum_{\indexL=1}^\valL R(u^\indexL_k) \normd{q_{k}^\indexL} \right.\right.\\
&\left.\left.   +\normd{\subgradp^\indexL_{k+1/2}} \right)\right)\\
\normd{{\bf u}_{k+1/2}}&\leq \normd{{\bf u}_{k}} +\delta t\left( \sum_{\indexL=1}^\valL \frac{J(u^\indexL_k)}{H(u^\indexL_k)} \const_H+\const_J\right)\leq 1  \\
&+\delta t\const_J\left( \sum_{\indexL=1}^\valL \frac{\normd{u^\indexL_k}}{H(u^\indexL_k)} \const_H+1\right).
\end{split}
\end{equation*}

Notice that we defined $R(u^\indexL_k)=0$ for $u^\indexL_k=0$. As $H$ is a norm, the equivalence of norm in finite dimensions implies that ${\normd{u^\indexL_k}}{H(u^\indexL_k)}$ is bounded by some constant $c<\infty$. We then have $\normd{{\bf u}_{k+1/2}}\leq\kappa= 1+\delta t\const_J\left(1+ \valL \const_H c\right)$.

\bigskip  Since ${\bf u}_{k+1/2}$ is the global minimiser of (10), then:
{\small{
\begin{equation*}
\begin{split}
&\hspace{3cm}F_k^\valL({\bf u}_{k+1/2})\leq F_k^\valL({\bf u}_k)\\
&\hspace{-0.3cm}\frac1{2\delta t}\normd{{\bf u}_{k+1/2}-{\bf u}_k}^2+\sum_{\indexL=1}^\valL J(u^\indexL_{k+1/2})\leq\sum_{\indexL=1}^\valL R(u_k^\indexL)\langle q_{k}^\indexL-\tilde q_k^\indexL,u^\indexL_{k+1/2}\rangle\\
&\hspace{-0.3cm}\frac1{2\delta t}\normd{{\bf u}_{k+1/2}-{\bf u}_k}^2+\sum_{\indexL=1}^\valL J(u^\indexL_{k+1/2})\leq  \sum_{\indexL=1}^\valL R(u_k^\indexL)H(u_{k+1/2}^\indexL)\\
&\hspace{-0.3cm}\sum_{\indexL=1}^\valL\left( J(u^\indexL_{k+1/2}) - \frac{J(u_k^\indexL)}{H(u_k^\indexL)}H(u_{k+1/2}^\indexL)  \right)\leq  -\frac1{2\delta t}\normd{{\bf u}_{k+1/2}-{\bf u}_k}^2\\
&\hspace{-0.3cm}\normd{{\bf u}_{k+1/2}} \sum_{\indexL=1}^\valL H(u_{k+1}^\indexL) \left( R(u_{k+1}^\indexL) - R(u_k^\indexL)  \right)\leq  -\frac1{2\delta t}\normd{{\bf u}_{k+1}-{\bf u}_k}^2\\
&\hspace{-0.3cm}\sum_{\indexL=1}^\valL H(u_{k+1}^\indexL) \left( R(u_{k+1}^\indexL) - R(u_k^\indexL)  \right)\leq  -\frac1{2\delta t\kappa}\normd{{\bf u}_{k+1}-{\bf u}_k}^2. \qedhere
\end{split}
\end{equation*}
}}
\end{proof}


Point 3 of Proposition \ref{prop2} contains weights $H(u_{k+1}^\indexL)$ that prevent from showing the exact decrease of the sum of ratios. This is in line with the results in \cite{bresson2013multiclass}.
To ensure the decrease of the sum of ratios
$\sum_{\indexL=1}^\valL {J(u^\indexL_k)}/{ H(u^\indexL_k)}$, it is possible to introduce auxiliary variables to deal with individual ratio decrease, as in \cite{rangapuram2014tight}. The involved sub-problem at each iteration $k$ is nevertheless more complex to solve.


\subsection{Introducing Label Priors}
The partitioning process induced by the scheme \eqref{pde2}
so far does not integrate any label information.
As we are working in a semi-supervised setting, we consider given a small subsets of labelled nodes $\mathcal{N}^\indexL\subset\mathcal{N}$ (with $|\mathcal{N}^\indexL|<<|\subset\mathcal{N}|$) belonging to each cluster $i$, with $\mathcal{N}^\indexL\cap \mathcal{N}_{j}=\emptyset$.
 Denoting as  $\mathcal{L}=\cup_{\indexL=1}^\valL \mathcal{N}^\indexL$, the objective is to propagate the prior information in the graph in order to infer pseudo-labels for the remaining nodes $x\in\mathcal{N}\backslash\mathcal{L}$.
To that end, we simply have to modify the coupling constraint $C$ in \eqref{constr_coupling} as
\begin{equation}
    \label{constr_coupling2}
C:\left\{{\bf u},\, \textrm{s.t. }\begin{array}{ll}
  \sum_{\indexL=1}^\valL u^\indexL(x)=0& \textrm{if }x\in\mathcal{N}\backslash \mathcal{L}\\
  u^\indexL(x)\geq \epsilon&\textrm{if }x\in\mathcal{N}^\indexL\\
  u^{\indexL'}(x)\leq -\epsilon, \forall \indexL'\neq \indexL&\textrm{if }x\in  \mathcal{L}\backslash \mathcal{N}^\indexL
  \end{array}
  \right\}.
  \end{equation}

With such constraint, clusters can no longer vanish or merge since they all contain different active nodes $x\in\mathcal{N}^\indexL$ satisfying $u^\indexL(x)>0$.
The same scheme \eqref{pde2} with the new constraint set \eqref{constr_coupling2} can be applied to propagate these labels. Once it has converged, the inferred pseudo-label of each unlabelled node $x\in\mathcal{N}\backslash\mathcal{L}$ is taken as:
\begin{equation}\label{final_out}
L(x)\in\uargmax{i\in \{1,\cdots \valL\}} u^\indexL(x).\end{equation}
Soft labelling can either be obtained by considering all the clusters with non negative weights $\mathcal{I}(x)=\{\indexL,\, u^\indexL(x)\geq 0\}\neq \emptyset$ and with relative weights $w^\indexL(x)=u^\indexL(x)/(\sum_{\indexL\in\mathcal{I}(x)}u^\indexL(x))$, with the convention that $w^\indexL(x)=1/\valL$, in the case that $u^\indexL(x)=0$ for all $\indexL=1\cdots\valL$ (which has never been observed in our experiments). For notation purposes and following the notation in~\eqref{generalSSL}, we denote the output of~\eqref{final_out} as $\hat{y_i}$, having $\hat{Y}=\{\hat{y}_k\}_{k=l+1}^{n}$.
The parameter $\epsilon$ in \eqref{constr_coupling2} is set to a small numerical value.

\subsection{Hybrid Framework}~\label{section:hybrid}
We now fit our energy model into a hybrid framework as displayed in Fig.~\ref{fig:hybridFramework}. We perform an alternating optimisation between \eqref{generalSSL} and  \eqref{pde2}. The process \eqref{pde2}, that provides pseudo-labels $\hat y$, is extensively described in previous subsections (as our main contribution). The  functional  \eqref{generalSSL} is now detailed. For the first term in \eqref{generalSSL}, we use a cross entropy loss with a weighting parameter as imbalance class strategy. We follow a standard strategy  e.g.~\cite{he2013imbalanced,bookImbalanceHerrera} such that the parameter is inversely proportional to the number of samples for class $k: \beta_k \propto 1/\mathcal{E}_n$ where $\mathcal{E}_n$ is the total number of samples. The second term in \eqref{generalSSL}  involves the inferred pseudo-labels updates $\hat y$ in a cross entropy loss, along with a dual weighting parameter $\beta\varphi$ where $\varphi$ is a measure of the uncertainty referring to the entropy. The remaining of the experiments follows this alternating optimisation. The choice for $f_\theta$ is discussed in the experimental results section.

\section{Experimental Results}
This section  focuses on the detailed description of the experiments that we conducted to evaluate our proposed approach.

\subsection{Data Description}
We extensively evaluate our approach using six very diverse datasets.
Firstly, we use the Fashion-MNIST~\cite{xiao2017fashion} dataset.  The dataset is composed of 70k grayscale images containing 10 classes from fashion items.  To further support the generalisation and robustness of our technique, we  use two major complex datasets from the medical domain. The  ChestX-ray14 dataset~\cite{wang2017chestx}  is composed of 112,120 frontal chest view X-ray with size of 1024$\times$1024 and 14 classes reflecting diverse pathologies. The  CBIS-DDSM dataset~\cite{lee2017curated}, composed of  3,103 mammography images with a mean size  of 3138$\times$5220, contains normal, benign, and malignant cases with verified pathology information.   Finally, we use three natural image datasets. The CIFAR-10 and CIFAR-100 dataset~\cite{krizhevsky2009learning} that contains 60k colour images of size 32$\times$32 with 10 and 100 different classes respectively. Finally, the Mini-ImageNet~\cite{vinyals2016matching} dataset consisting of 60k colour images with of size $64\times64$ and 100 classes.

\begin{table*}[t!]
\centering
\caption{Numerical comparison of our approach vs other energy-based approaches. The values are computed from 20 runs as the average of error rate $\pm$ standard deviation over several label counts.  The best results are highlighted  \colorbox{colorbrewer3!20}{green} whilst the second best ones are highlighted in  \colorbox{colorbrewer1!10}{red}.}
\renewcommand{\arraystretch}{1.1}
\begin{tabular}{ccccccc}
\hline
\rowcolor[HTML]{EFEFEF}
\cellcolor[HTML]{EFEFEF} & \multicolumn{6}{c}{\cellcolor[HTML]{EFEFEF}\% \textsc{Labelled Set} } \\ \cline{2-7}
\rowcolor[HTML]{EFEFEF}
\multirow{-2}{*}{\cellcolor[HTML]{EFEFEF}\textsc{Technique}} & 1\% & 2\% & 5\% & 10\% & 20\% & 30\% \\ \hline \hline
Harmonic Gaussian (HG)~\cite{zhu2002learning} & 18.97$\pm$0.34
 & 17.96$\pm$0.23 & \cellcolor{red!6}16.35$\pm$0.10  & 15.33$\pm$0.10 & 14.48$\pm$0.11 & 14.09$\pm$0.08 \\ \hspace*{2px}
Local to Global Consistency (LGC)~\cite{zhou2004learning} & \cellcolor{red!6}18.65$\pm$0.59 & \cellcolor{red!6} 17.81$\pm$0.31 & 16.41$\pm$0.12 & \cellcolor{red!6}15.20$\pm$0.15 &\cellcolor{red!6} 14.33$\pm$0.13 &  \cellcolor{red!6}13.68$\pm$0.06\\
Lazy Random Walks (LRW)~\cite{zhou2004learningb} & 19.09$\pm$0.25 & 17.84$\pm$0.17 & 16.38$\pm$0.06 & 15.69$\pm$0.10 & 15.35$\pm$0.10 & 15.19$\pm$0.07 \\
Sparse Label Propagation (SLP)~\cite{jung2016semi}  & 78.18$\pm$1.37 & 50.16$\pm$3.81 & 44.18$\pm$0.37 & 25.17$\pm$1.75 & 19.03$\pm$1.75 &  13.16$\pm$0.11\\
Weighted Nonlocal Laplacian (WNLL)~\cite{shi2017weighted} & 19.68$\pm$0.17 & 18.96$\pm$0.28 & 17.41$\pm$0.12 & 16.15$\pm$0.17 & 14.93$\pm$0.13 & 14.32$\pm$0.09 \\
Centered Kernel (CK)~\cite{mai2018random,mai2021consistent} & 31.16$\pm$1.28 & 24.29$\pm$0.61 & 20.30$\pm$0.11 & 18.62$\pm$0.24 & 16.54$\pm$0.12 & 15.38$\pm$0.07 \\
Poisson Learning (PoL)~\cite{calder2020poisson} & 20.60$\pm$1.37  & 19.89$\pm$3.81 & 19.17$\pm$0.37  & 19.03$\pm$1.75 & 18.82$\pm$0.71 & 18.87$\pm$0.41 \\
\rowcolor{pinegreen!12}
\cellcolor{white} Ours & \textbf{17.62$\pm$0.21}  & \textbf{16.05$\pm$0.24} & \textbf{14.21$\pm$0.08} & \textbf{13.16$\pm$0.11} & \textbf{12.31$\pm$0.09}  & \textbf{11.82$\pm$0.04} \\ \hline
\end{tabular}
\label{table::energyModels}
\end{table*}

\subsection{Evaluation Protocol}
We design the following evaluation scheme to validate our theory.

\textbf{Baseline Comparison against existing energy models.} As the core contribution of this work is a new graph based energy model, we first compared our technique against existing graph-based energy methods: Harmonic Gaussian (HG)~\cite{zhu2002learning}, Local to Global Consistency (LGC)~\cite{zhou2004learning}, Lazy Random Walks (LRW)~\cite{zhou2004learningb}, Sparse Label Propagation (SLP)~\cite{jung2016semi}, Weighted Nonlocal Laplacian (WNLL)~\cite{shi2017weighted}, Centered Kernel (CK)~\cite{mai2018random,mai2021consistent} and Poisson Learning (PoL)~\cite{calder2020poisson}. To solely evaluate the impact of these energy models, we used the same network architecture and substituted each method as the energy model  (i.e. only green box of Fig.~\ref{fig:hybridFramework}). We run the experiments for all techniques under the same conditions by constructing a k-NN graph with $k=20$ using the features extracted from a 13-Layer Network. We remark that these set of experiments are to purely compare energy models and not networks Fig.~\ref{fig:hybridFramework}. We use different label \% counts \{1,2,5,10,20,30\}, and  report the mean error and standard deviation  over  randomly select the labelled samples over twenty repeated times (20 different splits).

\textbf{Comparison against SOTA Techniques.} For our full model we compare to the state-of-the-art for each of the differing domains

\textit{Medical Datasets.} For the ChestX-ray14 dataset~\cite{wang2017chestx},  we firstly compared against the SOTA supervised techniques of \cite{wang2017chestx,yao2018weakly,guendel2018learning,shen2018dynamic,baltruschat2019comparison,rajpurkar2017chexnet,guan2020multi,ma2019multi,kim2021xprotonet} using the official partition of the dataset (70\% labelled data) against ours using 20\% of labelled data. Moreover, we compared against the  SOTA semi-supervised techniques of \cite{tarvainen2017mean,aviles2019graphx,liu2020semi,liu2021self}. All  semi-supervised techniques are reported using 20\% of labels. We also provide comparison with existing  techniques~\cite{lee2017curated} \cite{zhu2017deep,wei2021beyond,shu2020deep,shen2021interpretable} on the CBIS-DDSM dataset.
The quality check is performed following standard convention in the medical domain by a ROC analysis using the area under the curve (AUC).

\textit{Natural Image Datasets.} Finally, we report results against the SOTA semi-supervised techniques for natural image datasets: $\Pi-$Model and Temporal Ensembling~\cite{laine2016temporal}, Mean Teacher (MT)~\cite{tarvainen2017mean}, VAT~\cite{miyato2018virtual}, $\Pi+$SNTG~\cite{luo2018smooth}, MT+fast-SWA~\cite{athiwaratkun2018there}, MT+ICT~\cite{verma2019interpolation}, Dual Student~\cite{ke2019dual}, MUSCLE+MT+LP~\cite{xie2021muscle}, MT+TSSDL~\cite{shi2018transductive}, MT+LP~\cite{iscen2019label}, CycleCluster~\cite{sellars2020two}, DAG~\cite{li2020density}, UPS~\cite{rizve2020defense}, PL-Mixup~\cite{arazo2020pseudo}, LaplaceNet~\cite{sellars2021laplacenet}, UDA~\cite{xie2020unsupervised},  SimPLE~\cite{hu2021simple}, FixMatch~\cite{sohn2020fixmatch}. We evaluate the quality of the classifiers by reporting the error rate and standard deviation over five runs and  for a range number of labelled samples.

\begin{figure}[t!]
    \centering
    \includegraphics[width=0.45\textwidth]{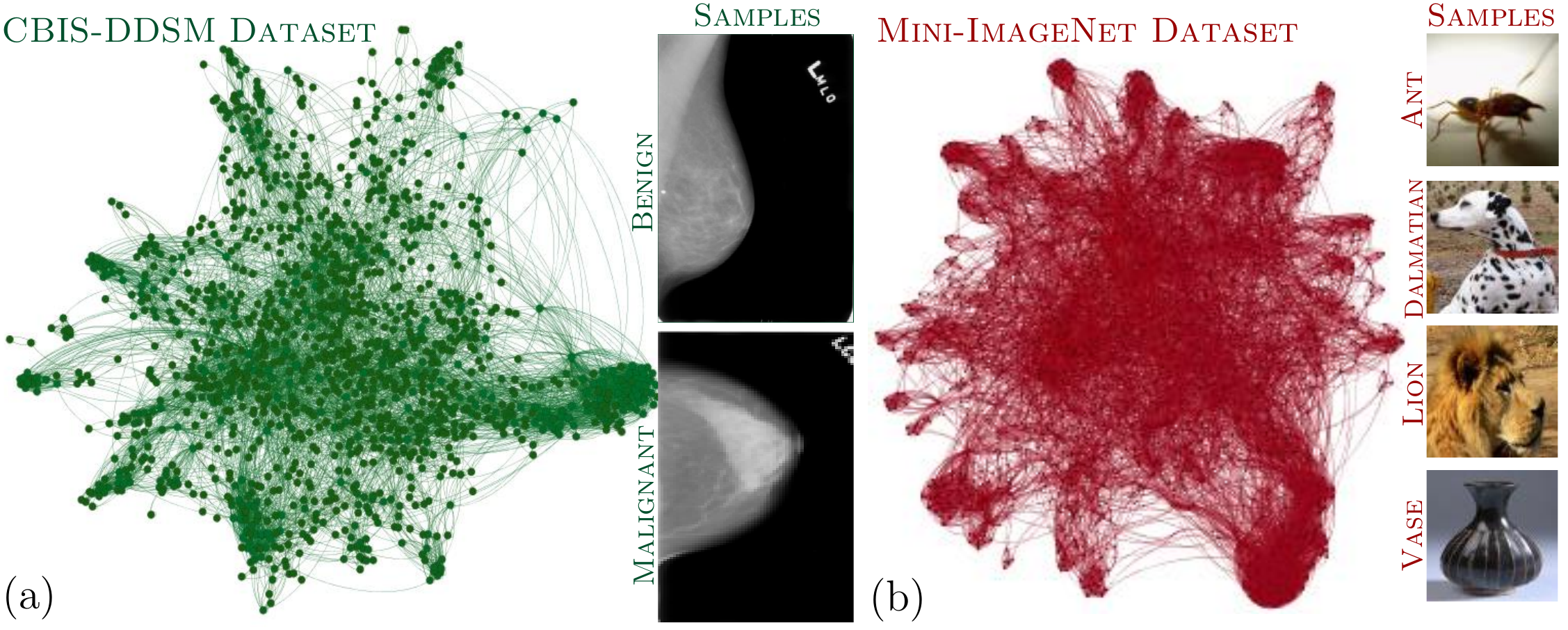}
    \caption{Graph visualisation for two selected datasets used in our experiments. Graphical representation for (a) the CBIS-DDSM dataset and (b) the Mini-ImageNet Dataset. A few sample images for each datasets are displayed at the left side of each graph. }
    \label{fig:graphVis}
\end{figure}

\subsection{Implementation Details}
We set the architecture for  $f_\theta$ (i.e. red box of Fig.~\ref{fig:hybridFramework}) as follows. For the medical datasets we use a ResNet-18~\cite{he2016deep}. For the natural image datasets, we ran
experiments with three different networks. For CIFAR-10 and CIFAR-100, we divided our experiments into two parts. For the first part,  we use a 13-Layer Network for a fair comparison as existing approaches run under this architecture. For the second part and motivated by the work of~\cite{oliver2018realistic}, we compare against the most recent techniques under exactly the same conditions which includes the optimiser, RandAugment implementation and a WideResNet-28-2 (WRN-28-2). Finally,  we use a ResNet-18 for the Mini-Imagenet dataset as fair comparison for existing techniques. For the graph generation, a k-NN graph with $k=20$ is constructed  using the features from each respective architecture - Fig.~\ref{fig:graphVis} displays  examples of generated graphs for two selected datasets.
For our approach, we set the number of epochs of 310 and a weight decay of $2\times10^{-4}$. The learning rate is set to 5e-2 and with a scheduled cosine annealing. We use as optimiser stochastic gradient descent (SGD) and  implement our code in PyTorch.

\begin{table*}[t!]
\caption{{Performance comparison of our approach (20\% of labelled data) against SOTA fully-supervised techniques (70\% of labelled data). We reported the AUC per class and average AUC  over all 14 pathologies. All compared techniques are performed using the official data split.
The results in bold denotes the highest performance.}}
\label{table::ChestXrayfull}
\begin{tabular}{lccccccccc|cc}
\cline{2-11}
\textsc{ChestX-ray14} & \multicolumn{9}{c}{\cellcolor[HTML]{EFEFEF}\textsc{Fully Supervised Techniques (70\% Labelled data)}} & \multicolumn{1}{|c}{\cellcolor[HTML]{EFEFEF}SSL } \\ \hline
\multicolumn{1}{c}{\textsc{Pathology}} & \begin{tabular}[c]{@{}c@{}}{Wang et al}\\ {\cite{wang2017chestx}}\end{tabular} & \begin{tabular}[c]{@{}c@{}}{Yao et al}\\ {\cite{yao2018weakly}}\end{tabular} & \begin{tabular}[c]{@{}c@{}}{Guendel}\\ {et al\cite{guendel2018learning}}\end{tabular}& \begin{tabular}[c]{@{}c@{}}{Shen et al}\\ {\cite{shen2018dynamic}}\end{tabular}  &
\begin{tabular}[c]{@{}c@{}}Baltruschat, \\ {\cite{baltruschat2019comparison}}\end{tabular} & \begin{tabular}[c]{@{}c@{}}{CheXNet}\\ {\cite{rajpurkar2017chexnet}}\end{tabular} & \begin{tabular}[c]{@{}c@{}}{Guan}\\ et al\cite{guan2020multi}\end{tabular}& \begin{tabular}[c]{@{}c@{}}{Ma et al}\\  \cite{ma2019multi}\end{tabular} &
\begin{tabular}[c]{@{}c@{}}Kim, \\ et al\cite{kim2021xprotonet}\end{tabular} & \quad \begin{tabular}[c]{@{}c@{}}CREPE \\ (Ours)\end{tabular} \quad\\ \hline \hline
Atelectasis &  70.03 & 73.30 & 76.70 & 76.60 &  76.30 & 75.9 & 78.10 & 77.70 & 78.2 & 78.65\\
Cardiomegaly & 81.00 & 85.60 & 88.30 & 80.10 &  87.50 & 87.1 & 88.30 & 89.40 & 88.1 & 88.74\\
Effusion & 75.85 & 80.60 & 82.80 & 79.70 &  82.20 & 82.1 & 83.10 & 82.90 & 83.6 & 83.15 \\
Infiltration & 66.14 & 67.30 & 70.90 & 75.10 & 69.40 & 70.0 & 69.70 & 69.60 & 71.5 & 72.25  \\
Mass & 69.33 & 77.70 & 82.10 & 76.00 &  82.00 & 81.0 & 83.00 & 83.80 & 83.4 & 83.41\\
Nodule & 66.87 & 71.80 & 75.80 & 74.10 &  74.70 & 75.9 & 76.40 & 77.71 & 79.9 & 76.61\\
Pneumonia & 65.80 & 68.40 & 73.10 & 77.80 &   71.40 & 71.8 & 72.50 & 72.20 & 73.0 & 76.04\\
Pneumothorax & 79.93 & 80.50 & 84.60 & 80.00 & 84.00 & 84.8 & 86.60 & 86.20 & 87.4 & 86.89 \\
Consolidation & 70.32 & 71.10 & 74.50 & 78.70 & 74.90 & 74.1 & 75.80 & 75.00 & 74.7  & 75.42 \\
Edema & 80.52 & 80.60 & 83.50 & 82.00 &  84.60 & 84.4 & 85.30 & 84.60 & 83.4  & 84.96\\
Emphysema & 83.30 & 84.20 & 89.50 & 77.30 &  89.50 & 89.1 & 91.10 & 90.80 & 93.6  & 90.95\\
Fibrosis & 78.59 & 74.30 & 81.80 & 76.50 &  81.60 & 81.0 & 82.60 & 82.70 & 81.5 & 82.16\\
Pleural Thicken & 68.35 & 72.40 & 76.10 & 75.90 &   76.30 & 76.8 & 78.00 & 77.90 & 79.8 & 76.84\\
Hernia & 87.17 & 77.50 & 89.60 & 74.80 &  93.70 & 86.7 & 91.80 & 93.40 & 89.6 & 88.38\\ \hline
\textsc{Average AUC} & 74.51 & 76.09 & 80.66 & 77.47 &  80.57 & 80.05 & 81.60 & 81.71 & \textbf{82.0} & \textbf{81.75}\\ \hline
\end{tabular}
\end{table*}
\begin{table*}[t!]
\begin{minipage}[t]{.62\textwidth}
\caption{Numerical comparison of our technique and existing semi-supervised approaches for the ChestX-ray14 dataset. All techniques use 20\%of labelled data, and the reported results reflect the AUC per class and average.  The best results are highlighted  \colorbox{colorbrewer3!20}{green}.}
\label{table::SSLmedical1}
\begin{tabular}{lcccccc}
\cline{2-7}
\textsc{ChestX-ray14} & \multicolumn{6}{c}{\cellcolor[HTML]{EFEFEF}{\color[HTML]{000000} \textsc{Semi-Supervised/Self-Supervised Techniques }}} \\ \hline
\multicolumn{1}{c}{\textsc{Pathology}} & \begin{tabular}[c]{@{}c@{}}MT\\ {\cite{tarvainen2017mean}}\end{tabular} & \begin{tabular}[c]{@{}c@{}}GraphXNet\\ {\cite{aviles2019graphx}}\end{tabular} & \begin{tabular}[c]{@{}c@{}}MOCOV2\\ {\cite{chen2020improved,liu2021self}}\end{tabular} &  \begin{tabular}[c]{@{}c@{}}SRC-MT\\ {\cite{liu2020semi}}\end{tabular} &  \begin{tabular}[c]{@{}c@{}}S$^2$MTS$^2$\\ {\cite{liu2021self}}\end{tabular} &  {\quad \begin{tabular}[c]{@{}c@{}}CREPE \\ (Ours)\end{tabular}  \quad }\\ \hline \hline
Atelectasis & 75.12 & 71.89 & 77.21  & 75.38 & 78.57 &     \cellcolor{pinegreen!12} \textbf{78.65} \\
Cardiomegaly & 87.37 & 87.99 & 85.84 & 87.70 & 88.08 &  \cellcolor{pinegreen!12}\textbf{88.74} \\
Effusion & 80.81 & 79.20 & 81.62 & 81.58 & 82.87 &  \cellcolor{pinegreen!12}\textbf{83.15} \\
Infiltration & 70.67 & 72.05 & 70.91 & 70.40 &  70.68 & \cellcolor{pinegreen!12}\textbf{72.25} \\
Mass & 77.72 & 80.90 & 81.71 & 78.03 & 82.57 &  \cellcolor{pinegreen!12}\textbf{83.41} \\
Nodule & 73.27 & 71.13 & \cellcolor{pinegreen!12}\textbf{76.72} & 73.64 & 76.60 &  76.61 \\
Pneumonia & 69.17 & \cellcolor{pinegreen!12}\textbf{76.64} & 71.08 & 69.27 & 72.25 &  76.04 \\
Pneumothorax & 85.63 & 83.70 & 85.92 & 86.12 & 86.55 &  \cellcolor{pinegreen!12}\textbf{86.89} \\
Consolidation & 72.51 & 73.36 & 74.47 & 73.11 & \cellcolor{pinegreen!12}\textbf{75.47}  &  75.42\\
Edema & 82.72 & 80.20 & 83.57 & 82.94 &  84.83 & \cellcolor{pinegreen!12}\textbf{84.96} \\
Emphysema & 88.16 & 84.07 & 91.10 & 88.98 & \cellcolor{pinegreen!12}\textbf{91.88} &  90.95 \\
Fibrosis & 78.24 & 80.34 & 80.96 & 79.22 & 81.73 &  \cellcolor{pinegreen!12}\textbf{82.16} \\
Pleural Thicken & 74.43 & 75.70 & 75.65 & 75.63 & \cellcolor{pinegreen!12}\textbf{76.86} &  76.84 \\
Hernia & 87.74 & 87.22 & 85.62 & 87.27 &  85.98 &  \cellcolor{pinegreen!12}\textbf{88.38}\\ \hline
\textsc{Average AUC} & 78.83 & 78.88 & 80.17 & 79.23 & 81.06 &  \cellcolor{pinegreen!12}\textbf{81.75} \\ \hline
\end{tabular}
\end{minipage}
\hspace{0.7cm}
\begin{minipage}[t]{.30\textwidth}
\begin{flushleft}
\caption{ AUC performance comparison of existing SOTA supervised techniques and our technique for the CBIS-DDSM dataset. The best results are marked in  \colorbox{colorbrewer3!20}{green} whilst the second best in \colorbox{colorbrewer1!10}{{red}}.}
\label{table::SSLmedical2}
\begin{tabular}{lcccc}
\hline
\multicolumn{4}{c}{\cellcolor[HTML]{EFEFEF} \textsc{CBIS-DDSM Dataset}} \\ \hline
\cellcolor[HTML]{EFEFEF} & \multicolumn{2}{c}{ \cellcolor[HTML]{EFEFEF}\textsc{Paradigm}} & \cellcolor[HTML]{EFEFEF} \\
\multirow{-2}{*}{\cellcolor[HTML]{EFEFEF}\textsc{Technique}} & \multicolumn{1}{c}{\cellcolor[HTML]{EFEFEF}SL (85\%)} & \multicolumn{1}{c}{\cellcolor[HTML]{EFEFEF}SSL} & \cellcolor[HTML]{EFEFEF}\multirow{-2}{*}{\cellcolor[HTML]{EFEFEF}AUC} \\ \hline \hline
ResNet-34 & \checkmark &  & 79.2 \\
Zhu et al \cite{zhu2017deep} & \checkmark &  & 79.1 \\
Tao et al \cite{wei2021beyond} & \checkmark &  & 83.1 \\
Shu et al \cite{shu2020deep}  & \checkmark &  & 83.8 \\
Shen et al$\dagger$ \cite{shen2021interpretable}  & \checkmark &  & \cellcolor{red!6}84.0 \\   
CREPE  (Ours, 35\%) &  & \checkmark &  83.9\\
CREPE  (Ours, 40\%) &  & \checkmark &  \cellcolor{pinegreen!12}\textbf{84.2}\\ \hline
\end{tabular}
\end{flushleft}
\end{minipage}
\end{table*}

\begin{table*}[t!]
\caption{Comparison performance of our technique against existing semi-supervised techniques (\colorbox{blue!5}{consistency regularisation} and \colorbox{yellow!15}{pseudo-labelling} family of techniques) along with the fully supervised baseline for CIFAR-10 and CIFAR-100. All the results are derived from using a 13-Layer architecture, and reflect the error rate and standard deviation. The results in bold denotes the best performance. }
\centering
\renewcommand{\arraystretch}{1.1}
\label{table::13Layer}
\begin{tabular}{ccccccc}
\hline
& \multicolumn{3}{c}{\cellcolor[HTML]{EFEFEF}\begin{tabular}[c]{@{}c@{}}\textsc{Labelled Samples}\\ (CIFAR-10)\end{tabular}} &  & \multicolumn{2}{c}{\cellcolor[HTML]{EFEFEF}\begin{tabular}[c]{@{}c@{}}\textsc{Labelled Samples}\\ (CIFAR-100)\end{tabular}} \\ \cline{2-7}
\multirow{-2}{*}{\begin{tabular}[c]{@{}c@{}}\textsc{Technique}\\ (13-CNN)\end{tabular}} & 1k & 2k & 4k &  & 4k & 10k \\ \hline
Fully Supervised & 26.60$\pm$0.22 & 19.53$\pm$0.12 & 14.02$\pm$0.10 &  & 53.10$\pm$0.34 &  36.59$\pm$0.47\\ \hline \hline
\rowcolor{blue!3}
\multicolumn{7}{c}{\textsc{Consistency Regularisation Techniques}}  \\ \hline
\rowcolor{blue!3}
 $\Pi-$Model~\cite{laine2016temporal} & 31.65$\pm$1.20 & 17.57$\pm$0.44 & 12.36$\pm$0.31 &  & $-$ &  39.19$\pm$0.36\\
\rowcolor{blue!3}
Temporal Ensembling~\cite{laine2016temporal} & 23.31$\pm$1.01 & 15.64$\pm$0.39  & 12.16$\pm$0.24 &  & $-$ & 38.65$\pm$0.51 \\
\rowcolor{blue!3}
Mean Teacher (MT)~\cite{tarvainen2017mean}  & 21.55$\pm$1.48 & 15.73$\pm$0.31 & 12.31$\pm$0.28 &  & 45.36$\pm$0.49 & 36.08$\pm$0.51 \\
\rowcolor{blue!3}
VAT~\cite{miyato2018virtual}& $-$ & $-$ & 11.36$\pm$0.34 &  & $-$  &  $-$ \\
\rowcolor{blue!3}
 $\Pi+$SNTG~\cite{luo2018smooth}  & 21.23$\pm$1.27 & 14.65$\pm$0.31 & 11.00$\pm$0.13 &  & $-$ & 37.97$\pm$0.29 \\
\rowcolor{blue!3}
MT+fast-SWA~\cite{athiwaratkun2018there}  & 15.58$\pm$0.12 &  11.02$\pm$0.23 & 9.05$\pm$0.21  &  & $-$ &33.62$\pm$0.54  \\
\rowcolor{blue!3}
MT+ICT~\cite{verma2019interpolation}& 15.48$\pm$0.78  & 9.26$\pm$0.09  & 7.29$\pm$0.02 &  & $-$ & $-$ \\
\rowcolor{blue!3}
Dual Student~\cite{ke2019dual} & 14.17$\pm$0.38 & 10.72$\pm$0.19 & 8.89$\pm$0.09 &  &  $-$ & 32.77$\pm$0.24 \\
\rowcolor{blue!3}
MUSCLE+MT+LP~\cite{xie2021muscle} & 13.29$\pm$0.36 & $-$ & $-$ &  & 42.34$\pm$0.45 &  35.21$\pm$0.25\\ \hline \hline
\rowcolor{yellow!5}
\multicolumn{7}{c}{\textsc{Pseudo-Labelling Techniques}}  \\ \hline
\rowcolor{yellow!5}
MT+TSSDL~\cite{shi2018transductive} & 18.41$\pm$0.92 & 13.54$\pm$0.32 & 9.30$\pm$0.55 &  &  $-$  &  $-$  \\
\rowcolor{yellow!5}
MT+LP~\cite{iscen2019label} & 16.93$\pm$0.70 & 13.22$\pm$0.29 & 10.61$\pm$0.28 &  & 43.73$\pm$0.20 &  35.92$\pm$0.47\\
\rowcolor{yellow!5}
CycleCluster~\cite{sellars2020two} & 15.52$\pm$0.88 & 12.79$\pm$0.35 & 10.79$\pm$0.45 &  & 45.19$\pm$0.34  &  35.65$\pm$0.50\\
\rowcolor{yellow!5}
DAG~\cite{li2020density} & 7.42$\pm$0.41 & 7.16$\pm$0.38 & 6.13$\pm$0.15 &  & 37.38$\pm$0.64 &  32.50$\pm$0.21\\
\rowcolor{yellow!5}
UPS~\cite{rizve2020defense} & 8.18$\pm$0.15 & $-$ & 6.39$\pm$0.02 &  & 40.77$\pm$0.10 & 32.00$\pm$0.49 \\
\rowcolor{yellow!5}
PL-Mixup~\cite{arazo2020pseudo} & 6.85$\pm$0.15 & $-$ & 5.97$\pm$0.15 &  & 37.55$\pm$1.09 & 32.15$\pm$0.50 \\
\rowcolor{yellow!5}
LaplaceNet~\cite{sellars2021laplacenet} & 5.33$\pm$0.02 & 4.99$\pm$0.12& 4.64$\pm$0.07 &  & 31.64$\pm$0.02 & 26.60$\pm$0.23 \\
\rowcolor{yellow!5}
CREPE (Ours) & \textbf{5.04$\pm$0.03} & \textbf{4.58$\pm$0.11} & \textbf{4.31$\pm$0.08} &  & \textbf{31.02$\pm$0.03} & \textbf{25.11$\pm$0.19}  \\ \hline
\end{tabular}
\end{table*}

\begin{table*}[t!]
\centering
\begin{minipage}[b]{0.50\linewidth}
\caption{Performance comparison using CIFAR-10/100 reporting error rate$\pm$ standard deviation. The \colorbox{blue!5}{consistency regularisation} and \colorbox{yellow!15}{pseudo-labelling} techniques were run under the same code-base using same architecture WRN-28-2. }
\centering
\label{table::WRN}
\begin{tabular}{cccccc}
\hline
& \multicolumn{2}{c}{\cellcolor[HTML]{EFEFEF}\begin{tabular}[c]{@{}c@{}}Labelled Samples\\ (CIFAR-10)\end{tabular}} &  & \multicolumn{2}{c}{\cellcolor[HTML]{EFEFEF}\begin{tabular}[c]{@{}c@{}}Labelled Samples\\ (CIFAR-100)\end{tabular}} \\ \cline{2-6}
\multirow{-2}{*}{\begin{tabular}[c]{@{}c@{}}\textsc{Technique}\\ (WRN-28-2)\end{tabular}} & 2k & 4k &  & 4k & 10k \\ \hline \hline
\rowcolor{blue!3}
UDA~\cite{xie2020unsupervised} & 5.61$\pm$0.16 & 5.40$\pm$0.19 &  & 36.19$\pm$0.39 & 31.49$\pm$0.19 \\
\rowcolor{blue!3}
FixMatch~\cite{sohn2020fixmatch} & 5.42$\pm$0.11 & 5.30$\pm$0.08 &  & 34.87$\pm$0.17 &  30.89$\pm$0.18\\
\rowcolor{yellow!5}
SimPLE~\cite{hu2021simple} & 5.27$\pm$0.18 & 5.33$\pm$0.20 &  & 34.75$\pm$0.16 & 29.18$\pm$0.25 \\
\rowcolor{yellow!5}
LaplaceNet~\cite{sellars2021laplacenet} & 4.71$\pm$0.05 & 4.35$\pm$0.10 &  & 33.16$\pm$0.22 & 27.49$\pm$0.22 \\
\rowcolor{yellow!5}
CREPE (Ours)  & \textbf{4.33$\pm$0.09} & \textbf{4.16$\pm$0.11} &  & \textbf{32.21$\pm$0.18} & \textbf{26.14$\pm$0.24} \\ \hline
\end{tabular}
\end{minipage}
\hspace{1cm}
\begin{minipage}[b]{0.4\linewidth}
\caption{Error rate ($\pm$ standard deviation) comparison for Mini-ImageNet dataset. All techniques use a ResNet-18 Network. Numbers in bold indicate best performance.}
\label{table::MiniImagenet}
\begin{tabular}{ccc}
\hline
 & \multicolumn{2}{c}{\cellcolor[HTML]{EFEFEF}\begin{tabular}[c]{@{}c@{}}\textsc{Labelled Samples}\\ \textsc{(Mini-ImageNet)}\end{tabular}} \\
\multirow{-2}{*}{\textsc{Technique}} & 4k & 10k \\ \hline
\rowcolor{blue!3}
Mean Teacher (MT)~\cite{tarvainen2017mean} & 72.51$\pm$0.22 &  57.55$\pm$1.11\\
\rowcolor{yellow!5}
LP~\cite{iscen2019label} & 70.29$\pm$0.81 & 57.58$\pm$1.47 \\
\rowcolor{yellow!5}
Two Cycle Learning~\cite{sellars2020two} & 69.12$\pm$1.05 & 54.27$\pm$0.71 \\
\rowcolor{yellow!5}
PL-Mixup~\cite{arazo2020pseudo} & 56.49$\pm$0.51 &  46.08$\pm$0.11\\
\rowcolor{yellow!5}
SimPLE~\cite{hu2021simple} & 50.21$\pm$0.42 & 43.44$\pm$0.12 \\
\rowcolor{yellow!5}
LaplaceNet~\cite{sellars2021laplacenet} & 46.32$\pm$0.27 &  39.43$\pm$0.09\\
\rowcolor{yellow!5}
CREPE (Ours)  & \textbf{{45.61$\pm$0.25}} & \textbf{38.33$\pm$0.11} \\ \hline
\end{tabular}
\end{minipage}
\end{table*}

\subsection{Results \& Discussion.}
In this section, we report and discuss the results and comparison of our proposed technique.

\medskip
\textbf{How Good is our Energy Model?} We start by evaluating the performance of our energy model.  To do this, we ran a set of comparisons of our technique against existing energy models including recent ones. For a fair comparison all techniques were fed with the same graph (constructed as detailed in previous subsection).
The results are displayed in Table~\ref{table::energyModels}, which reports the error rate averaged over 20 runs and the standard deviation under different \% of labelled data. In a closer look at the results, we can observe that our approach reports the lowest error rate for all label counts whilst LGC~\cite{zhou2004learning} ranked second. The techniques of CK~\cite{mai2018random} and SLP~\cite{jung2016semi} failed to be robust in the low label regime and needed a higher number of labels to improve the performance than the compared techniques. A similar  performance behaviour was observed in the techniques of HG~\cite{zhu2002learning}, LGC~\cite{zhou2004learning}, LRW~\cite{zhou2004learningb} and WNLL~\cite{shi2017weighted}. In contrast to the compared techniques, the performance of PoL~\cite{calder2020poisson} was not improved when more labels are considered.
Our technique reported a percentage of improvement in the range of 6\% to 14\% with respect to LGC, the second best ranked technique.
Overall and from the results, we highlight a \textit{key strength} of our energy approach  -- \textit{it demonstrates a good generalisation performance in the low regime labelled set, and consistent performance improvement when more labels are considered.}

\smallskip
\textbf{Hybrid Semi-Supervised Medical Image Classification.} We now evaluate our full hybrid framework (see Fig.~\ref{fig:hybridFramework}). We start by using the ChestX-ray14 benchmarking dataset~\cite{wang2017chestx}. We first compared our approach against the SOTA supervised techniques, they assume a large corpus of annotated data (70\%) whilst our technique reports the performance using 20\% of labels. The results are reported in Table~\ref{table::ChestXrayfull} displaying the AUC per class and the average over all classes. By inspection we can observe that our technique readily competes with existing deep supervised techniques in per class performance. Overall, our technique outperformed almost all existing techniques and places second behind the work of~\cite{kim2021xprotonet}. However, we remark that our technique is using far less labels (only 20\%) than all compared techniques (fully supervised 70\% of labelled data).

\begin{figure*}[t!]
    \centering
    \includegraphics[width=0.92\textwidth]{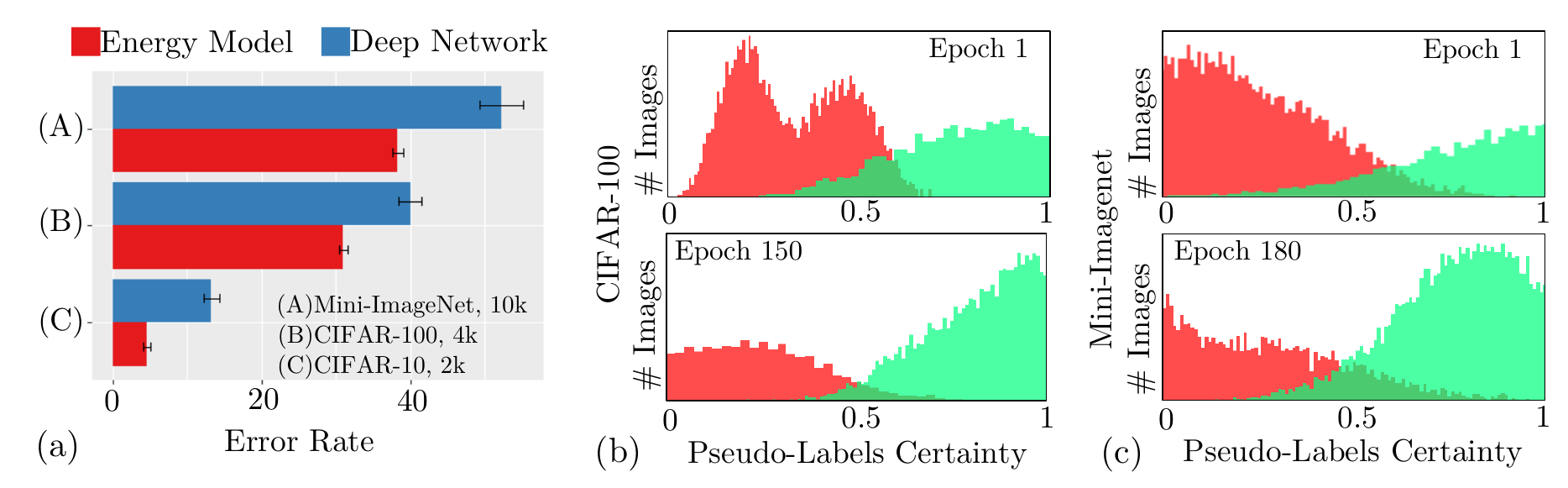}
    \caption{ (a) Error rate comparison for pseudo-label generation between our energy model vs the deep network for Mini-Imagenet, CIFAR-10/100.   (b)-(c) Certainty $\varphi$ of the pseudo-labels correctness (green) along with those incorrect (red) for two selected epochs for CIFAR-100 (4k labels) and Mini-Imagenet (10k labels) correspondingly.      }
    \label{fig:ablation1}
\end{figure*}

\begin{figure}[t!]
    \centering
    \includegraphics[width=0.5\textwidth]{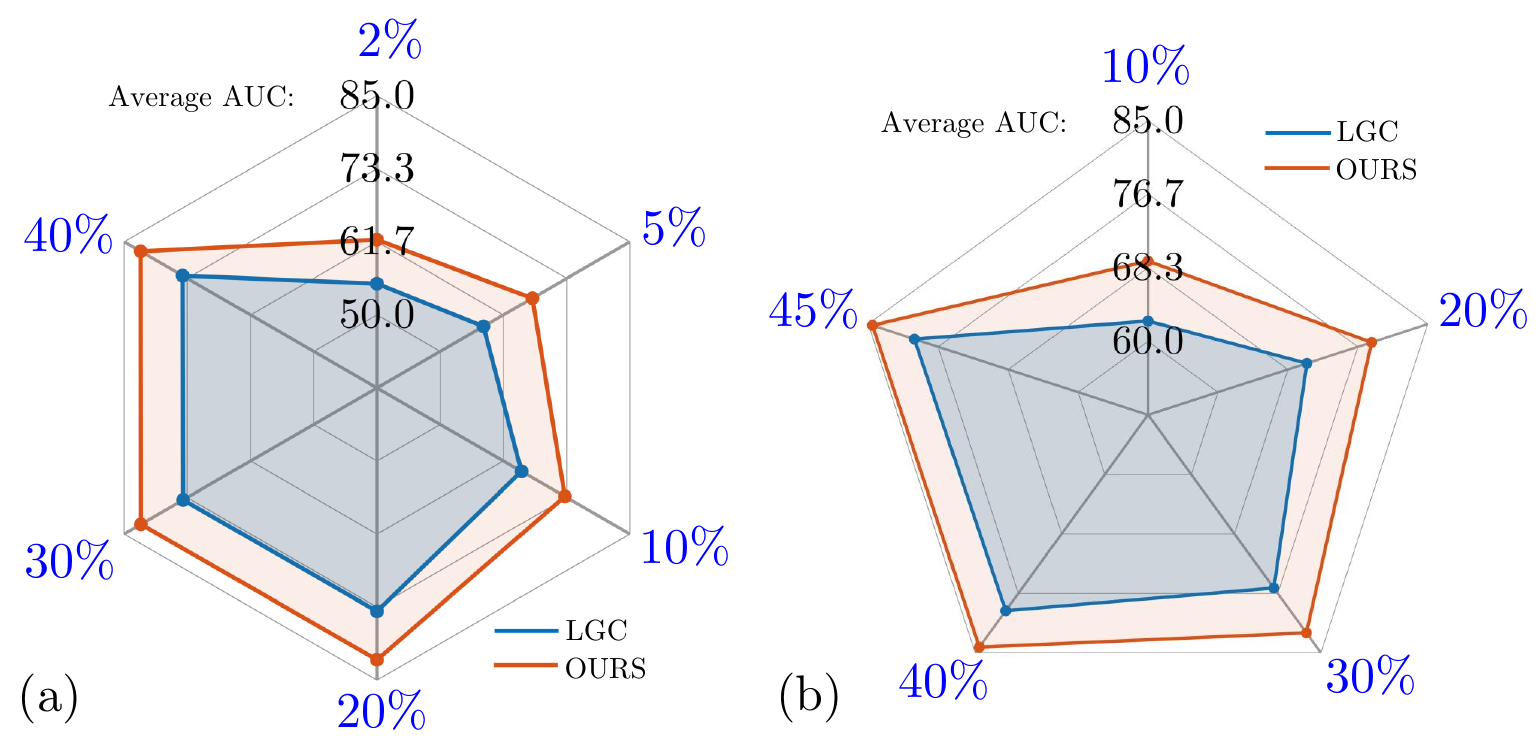}
    \caption{Average AUC comparison of hybrid models performance using  our energy model (red line) and that of~\cite{zhou2004learning} (blue line). (a) reports the comparison performance for different \% of labels for the ChestX-ray14 dataset whilst b) for the CBIS-DDSM dataset.  }
    \label{fig:ablation2}
\end{figure}

We also compared our proposed technique against existing semi-supervised models for medical imaging. The results are reported in Table~\ref{table::SSLmedical1}.
All results are produced using 20\% of labelled data, and the table displays the AUC per class and average over all classes. We observe that our technique reports the best AUC per class in majority of the pathologies, whith a best overall score (see scores highlighted in green).  We further support the performance of our method with another challenging medical dataset the CBIS-DDSM dataset~\cite{lee2017curated} for mammograms classification. We compare our approach against existing techniques for such dataset and report the results in Table~\ref{table::SSLmedical2}.
The compared techniques are deep supervised techniques, and to the best of our knowledge, there exists no modern semi-supervised techniques for this dataset to compare  with. For the supervised techniques the official partition is used (i.e. 85\% of labelled data). For our technique, we reported the AUC as result of an average of 5 split runs using 35\% and 40\% of labelled data. From the results, our technique produced readily compared performance whilst using only a fraction of labels, and reported the highest AUC using less than half of the labels than the compared techniques.

\textbf{Comparison with SOTA Semi-Supervised Models.} For our final set of experiments, we compared our technique against existing semi-supervised techniques for natural images where numerous methods have been proposed. For a fair comparison, we first provide a comparison using a 13-Layer Network, which is the most widely used network in semi-supervised classification. The results, in terms of error rate, are presented in Table~\ref{table::13Layer}. We observe that our technique provides a substantial improvement in performance with respect to consistency regularisation techniques for both CIFAR-10/100 datasets. In terms of existing pseudo-labelling techniques, our technique provides a significant margin of improvement. We namely obtain the lowest error rates for all the label counts and for both datasets.

Most recent and current SOTA techniques are based on more complex optimisation schemes scaling to more modern networks. Therefore, we also provide results against the techniques of: UDA~\cite{xie2020unsupervised}, FixMatch~\cite{sohn2020fixmatch}, SimPLE~\cite{hu2021simple} and LaplaceNet~\cite{sellars2021laplacenet}. To do this and following ~\cite{oliver2018realistic} for a fair comparison, we ran those set of techniques under the same code-based (i.e. the same implementation for the augmentations (RandAugment), optimiser and network architecture) using the same backbone a WRN-28-2. The performance comparison in terms of error rate is reported  in Table~\ref{table::WRN} using \{2k, 4k\} and \{4k, 10k\} labels for  CIFAR-10/100  respectively. Our technique reported the lowest error rate for all label counts and both datasets. We thus observe  a significant performance improvement with respect to consistency regularisation techniques for larger class number (CIFAR-100). Finally to further support the generalisation of our technique, we report results for Mini-ImageNet in Table~\ref{table::MiniImagenet}. In this experiment all methods use a ResNet-18. We highlight that for this complex dataset,  our technique reports a substantial performance improvement ([3\%, 37\%]).

\textbf{Our Energy Model vs Deep Network for Pseudo-labelling.} Our graph  energy model offers an alternative to the  inherent problem of network calibration and confirmation bias for pseudo-labelling. To further support this argument and our extensive experiments,  we provide a set of experiments to showcase the advantages of our energy model vs deep network for pseudo-label generation. We use for CIFAR-10/100 a 13 Layer Network whilst for Mini-Imagenet a ResNet-18. To do this, we run our framework from Section~\ref{section:hybrid} with our energy model, and  without it and allowing the network, directly from $f_\theta$, to generate the pseudo-labels. The results are displayed in Fig.~\ref{fig:ablation1}. In a closer look at the results, we can observe that in Fig.~\ref{fig:ablation1}-a that integrating our energy model encourage better pseudo-labels, which is reflected in having better performance than the deep network. This behaviour is consistently observed across all compared datasets. We also illustrate the certainty of the pseudo-labels over selected epochs from our approach  in Figs.~\ref{fig:ablation1}-a,c.  We observe that our model enforces constant control on the level of certainty of the inferred pseudo-labels over the learning process. This effect can be seen in the plots, where the green shaded area, that reflects the correctness of the pseudo-labels with respect to the ground truth, increases with the evolution of the epochs; whilst the number of incorrect pseudo-labels (see red area) decrease.

\textbf{A Better Energy Model.} Another key motivation of our work is the need for a robust energy model  as existing hybrid techniques~\cite{iscen2019label,sellars2020two,sellars2021laplacenet} have as commonality the use of the energy model of that~\cite{zhou2004learning}. We firstly showed in  Tables~\ref{table::13Layer},~\ref{table::WRN},~\ref{table::MiniImagenet} that our approach outperforms those existing hybrid techniques. The use of such energy model is motivated by its performance as ranked second in Table~\ref{table::energyModels}. To further support our results from that table, we run an additional set of experiments to further evaluate the gain of our energy model vs ~\cite{zhou2004learning} for more complex data -- the ChestX-ray14 (see plot (a)) and CBIS-DDSM (see plot (b)) medical datasets. To do this, we run the hybrid framework from Section~\ref{section:hybrid} with our energy model and the energy of~\cite{zhou2004learning} for different label rates -- that is, changing the green block from Fig.~\ref{fig:hybridFramework}.
The results are displayed in Fig.~\ref{fig:ablation2} in terms of average AUC over the classes. We observe that our technique consistently outperforms that of LGC~\cite{zhou2004learning} for all label rates and both datasets. More precisely, we report a performance improvement in the range of  10\% to 16\% on the different label rates. We also can observe that the both graphical approaches reach a point where more labels are not providing a significant performance improvement. This is an expected behaviour and follows the findings of several early works where the graphical tranductivity bonus is not longer effective as the nature of working on low label rates~\cite{vapnik1998statistical}.

\smallskip
\textbf{Overall Remarks.}  From our results, we now summarise our main highlights over existing techniques:

\faFlash~\textit{ Energy Models for Better Pseudo-labels.} From our experiments, we observe that energy models are a strong approach for pseudo-labelling.  The  intuition behind our technique's performance is that our energy model allows an explicit  control and update of the  predictive uncertainty on the pseudo-labels. By contrast, the compared techniques solely rely on the deep network to get the output without any guarantee or clear understanding on the correctness likelihood of the pseudo-labels.

\faFlash~\textit{Advantages of our Hybrid Model.} Unlike existing energy models, our framework takes advantages of both a robust energy model and deep learning principles. In contrast to pure deep learning techniques, our work offers several mathematical properties such as convergence of the scheme and a better understanding of the technique's behaviour. Finally, in comparison to existing hybrid techniques that use existing energy models and focus on new deep learning mechanism, we are the first work to investigate and propose more robust energy models for hybrid semi-supervised techniques.

\faFlash~\textit{Good Generalisation Capabilities.} In contrast to existing techniques that only present results on natural images, we provided an extensive comparison using natural and medical images. Medical images are more complex and fundamentally different than natural images~\cite{raghu2019transfusion}, and therefore, our results support the good generalisation capability of our technique.
At this point in time, our technique set  a new SOTA for semi-supervised techniques.

\section{Conclusion}
In this work we tackle the problem of classifying with scarce annotations via semi-supervised learning. For this purpose, we proposed a new hybrid framework for semi-supervised classification called {CREPE (1-Lapla\textbf{C}ian g\textbf{R}aph \textbf{E}nergy for \textbf{P}seudo-lab\textbf{E}ls)}. In contrast with existing techniques that focus on developing better mechanisms  for  improving  the  network performance,  we address the problem of how to design better energy models for pseudo-labelling.  The highlight of our work is a novel energy model based on the non-smooth $\ell_1$ norm of the normalised graph 1-Laplacian with thoughtfully selected class priors. Unlike existing deep learning or hybrid techniques, we provided a theoretical analysis of our model. We provide a convergence analysis for our model and its properties. We also show that energy models provide better pseudo-labels than the ones directly obtained from a network. We supported our model by an extensive evaluation using major datasets  composed of natural and medical images. We showed that our technique is able to provide state-of-the-art performance for semi-supervised classification.

\section*{Acknowledgments}
AI Aviles-Rivero gratefully acknowledges support from CMIH and
CCIMI, University of Cambridge. This project has also received funding from the European Union’s Horizon 2020 research and innovation programme under the Marie Skłodowska-Curie grant agreement No 77782. CB Sch\"{o}nlieb acknowledges support from the Leverhulme Trust project on ’Breaking the non-convexity barrier’, the Philip Leverhulme Prize, the Royal Society Wolfson Fellowship, the EPSRC EP/S026045/1, EP/T003553/1 and EP/N014588/1, the Wellcome Innovator Award RG98755,  the CCIMI and the Alan Turing Institute. RT Tan research in this work is supported by MOE2019-T2-1-130.

\ifCLASSOPTIONcaptionsoff
  \newpage
\fi

\bibliographystyle{IEEEtran}
\bibliography{references}

%








\end{document}